\documentclass[twoside]{article}

\usepackage[preprint]{aistats2026}

\usepackage[utf8]{inputenc}
\usepackage[T1]{fontenc}
\usepackage[round]{natbib}

\bibliographystyle{apalike}

\usepackage{hyperref}
\usepackage{url}
\usepackage{booktabs}
\usepackage{amsfonts}
\usepackage{nicefrac}
\usepackage{microtype}
\usepackage{xcolor}
\usepackage{graphicx}
\usepackage{caption}
\usepackage{amsmath}
\usepackage{amssymb}
\usepackage{bm}
\usepackage{bbm}
\usepackage{amsthm}
\usepackage{makecell}
\usepackage{mathtools}
\usepackage{float}
\usepackage{thmtools}
\usepackage{lipsum}
\allowdisplaybreaks

\newtheorem{theorem}{Theorem}
\newtheorem{proposition}{Proposition}
\newtheorem{lemma}{Lemma}
\newtheorem{corollary}{Corollary}

\newtheorem{assumption}{Assumption}

\newcommand{\R}{\mathbb{R}}
\newcommand{\E}{\mathbb{E}}
\newcommand{\sqn}[1]{\left\|#1\right\|^{2}}
\newcommand{\ev}[2]{\left\langle #1, #2 \right\rangle}
\newcommand{\grad}{\nabla}
\newcommand{\indic}{\mathbbm{1}}

\begin{document}

\twocolumn[
\aistatstitle{Convergence Analysis of SGD under Expected Smoothness}
\aistatsauthor{Yuta Kawamoto \And Hideaki Iiduka}
\aistatsaddress{
 Meiji University \\ ee227008@meiji.ac.jp \And Meiji University \\ iiduka@cs.meiji.ac.jp }
]

\begin{abstract}
Stochastic gradient descent (SGD) is the workhorse of large-scale learning, yet classical analyses rely on assumptions that can be either too strong (bounded variance) or too coarse (uniform noise). The \emph{expected smoothness (ES)} condition has emerged as a flexible alternative that ties the second moment of stochastic gradients to the objective value and the full gradient. This paper presents a self-contained convergence analysis of SGD under ES. We (i) refine ES with interpretations and sampling-dependent constants; (ii) derive bounds of the expectation of squared full gradient norm; and (iii) prove $O(1/K)$ rates with explicit residual errors for various step-size schedules. All proofs are given in full detail in the appendix. Our treatment unifies and extends recent threads \citep{khaled2020better,umeda2024accelerates}.
\end{abstract}

\section{Introduction}
\subsection{Background}
Stochastic Gradient Descent (SGD) has become the \emph{de facto} standard for large-scale optimization in modern machine learning, thanks to its remarkable scalability, modest memory requirements, and ability to cope with the nonconvex landscapes typical of deep learning \citep{bottou2018optimization,kiefer1952stochastic}. 
Although today it is mostly associated with training neural networks, SGD originated in the early days of stochastic approximation\citep{bottou2018optimization,rumelhart1986learning,hornik1989multilayer}.

\paragraph{Historical foundations.}
The story begins with the seminal work of \citet{robbins1951stochastic}, who introduced a stochastic approximation method to solve root-finding problems with noisy measurements. Shortly afterward, \citet{kiefer1952stochastic} developed a gradient-free variant. These works laid the probabilistic and algorithmic foundations of SGD. In the 1960s, \citet{polyak1964some} introduced momentum acceleration\citep{polyak1964some,nesterov1983method,sutskever2013importance}, and later \citet{polyak1992acceleration} together with Ruppert established the power of iterate averaging (Polyak–Ruppert averaging) by proving improved asymptotic efficiency\citet{polyak1992acceleration,glorot2010understanding,he2015delving}. By the 1990s, SGD had become a standard tool in statistics, signal processing, and control theory.

\paragraph{Classical bounded variance assumption.}
Analyses of SGD traditionally rely on the \emph{bounded variance} condition \citep{nemirovski2009robust}, requiring
\begin{align}\label{bv}
    \exists \sigma \geq 0 \text{ } 
\forall \bm{x} \in \mathbb{R}^d \text{ }  
\E_{v} \| g_{v}(\bm{x}) - \nabla f(\bm{x})\|^2 \le \sigma^2,
\end{align}
where $\bm{x} \in \mathbb{R}^d$ is a parameter of a deep neural network (DNN), $f$ is the objective function used in training the DNN, and $g_{v}(\bm{x})$ is the stochastic gradient of $f$ at $\bm{x}$, and $\mathbb{E}_{v}[\cdot]$ is the expectation over randomness $v$. This assumption yields clean rates in convex optimization but ignores that variance often depends on the iterate. In modern over-parameterized models, variance can be much larger away from the optimum, violating bounded variance \citep{bottou2018optimization}.

\paragraph{Variance growth models.}
To overcome this issue, researchers have proposed \emph{variance growth conditions} \citep{schmidt2013fast}. A representative form is
\[
\exists A \exists B \text{ } 
\E_{v} \| g_{v} (\bm{x}) - \nabla f(\bm{x})\|^2 \le A \|\nabla f(\bm{x})\|^2 + B,
\]
which allows the variance to shrink near stationary points but grow with gradient magnitude. This condition was instrumental in analyzing variance-reduced methods such as SAG \citep{schmidt2013fast}, SVRG \citep{johnson2013accelerating}, and SAGA \citep{defazio2014saga}, which combine stochastic updates with memory or control variates to achieve linear convergence in convex settings.

\paragraph{Strong growth condition.}
A more aggressive relaxation is the \emph{strong growth condition}, requiring
\begin{align}\label{sg}
    \exists \rho > 0 \text{ } \forall \bm{x} \in \mathbb{R}^d \text{ }
\E_{v} \| g_v (\bm{x}) \|^2 \le \rho \|\nabla f(\bm{x})\|^2.
\end{align}
This implies that stochastic gradients are well aligned with the true gradient. The condition underpins recent analyses in over-parameterized learning \citep{vaswani2019fast}, but is overly restrictive in heterogeneous data regimes, such as federated learning, where client gradients may deviate substantially.

\paragraph{Expected Smoothness: a unifying framework.}
The limitations of earlier assumptions motivated the \emph{Expected Smoothness (ES)} framework \citep{gower2019sgd,khaled2020better}, which posits that there exist $A,B,C \ge 0$ such that, for all $\bm{x} \in \mathbb{R}^d$, 
\begin{align}\label{es_condition}
    \E_{v} \| g_{v}(\bm{x})\|^2 \le 2A(f(\bm{x})-f^\star) + B \|\nabla f(\bm{x})\|^2 + C,
\end{align}
where $f^\star$ is the minimum value of $f$ over $\mathbb{R}^d$.
The ES condition is also called the {\em $ABC$ condition} \citep{gorbunov2023unified}, since it uses positive constants $A$, $B$, and $C$.
The ES condition subsumes all previous ones: bounded variance ($A=B=0$), variance growth ($B=0$), and strong growth ($A=0$). Crucially, ES introduces a dependence on suboptimality $f(\bm{x})-f^\star$, reflecting the empirical observation that stochastic noise decays as the iterates approach a minimizer. This makes ES both theoretically powerful and practically realistic, encompassing problems such as least-squares regression, logistic regression, and many machine learning models.

Our work leverages the ES framework to provide fully explicit, sampling-specific convergence guarantees in terms of the smoothness constants $\{L_i\}$, sampling probabilities, and the data heterogeneity $\Delta^{\inf}$. 
Unlike prior analyses in Nonconvex World \citep{khaled2020better}, which required knowledge of the total iteration count $K$ for constant step sizes, our approach supports constant, harmonic, polynomial, and cosine-annealing step sizes without $K$-dependent tuning.

To make these features intuitive, Table~\ref{tab:intro-rates} summarizes the typical convergence rates 
for the four common step-size schedules under ES. These schedules exhibit distinct asymptotic behaviors, 
highlighting the flexibility and efficiency of our approach\citep{loshchilov2017sgdr,goyal2017accurate,smith2017superconvergence,smith2017dontdecay}.

\begin{table*}[t]
\centering
\caption{Typical convergence rates of SGD with different step-size schedules under ES \eqref{es_condition} and $L$-smoothness. 
The constants are omitted for simplicity; see the main text for detailed ES-dependent bounds.}
\label{tab:intro-rates}
\begin{tabular}{llc}
\toprule
Step-sizes Schedule & Typical Choice & Asymptotic Rate\\
\midrule
Constant & $\eta_k = \eta < 2/(LB)$ & $\displaystyle{\min_{0 \leq k \leq K}\E \|\nabla f(\bm{x}_k)\|^2 = O(1/K) + O(\eta)}$ \\
Harmonic & $\eta_k = \eta/(k+1)$ & $\displaystyle{\min_{0 \leq k \leq K}\E \|\nabla f(\bm{x}_k)\|^2 = O(\log(K)/K)}$ \\
Polynomial Decay & $\eta_k = \eta/(k+1)^\alpha$, $0.5<\alpha<1$ & $\displaystyle{\min_{0 \leq k \leq K}\E \|\nabla f(\bm{x}_k)\|^2 = O(1/K^{1 - \alpha})}$ \\
Cosine Annealing & $\eta_k = \eta \cdot \frac{1}{2}(1+\cos(\pi k/K))$ & $\displaystyle{\min_{0 \leq k \leq K}\E \|\nabla f(\bm{x}_k)\|^2 = O(1/K)+O(\eta)}$ \\
\bottomrule
\end{tabular}
\end{table*}

\paragraph{Modern developments.}
The adoption of ES has enabled sharp convergence results in nonconvex optimization, including tight descent lemmas, explicit rates under constant or decaying step sizes, and improved analyses of distributed and federated SGD \citep{stich2019local,karimireddy2020mime,umeda2024accelerates}. At the same time, adaptive methods such as AdaGrad \citep{duchi2011adaptive}, RMSProp\citep{hinton2012rmsprop},Adadelta\citep{zeiler2012adadelta}, and Adam \citep{kingma2015adam} have further expanded the scope of stochastic optimization, although their convergence properties are less well understood \citep{reddi2018on}. Recent studies have continued to refine the picture, connecting ES to the Polyak–Łojasiewicz conditions \citep{karimi2016linear}, analyzing stochastic methods with momentum \citep{allen2019sgd}, and exploring batch-size scaling laws \citep{hardt2016train,keskar2017on,dinh2017sharp}.

\subsection{Motivation}\label{subsec:motivation}

Classical analyses of SGD typically relied on restrictive variance assumptions, such as the bounded variance condition \eqref{bv} or the strong growth condition \eqref{sg}. While mathematically convenient, these assumptions fail to capture the heterogeneous and data-dependent noise observed in modern large-scale optimization. 

The ES framework \eqref{es_condition} has recently emerged as a more realistic and unifying model of stochastic gradient noise \citep{gower2019sgd,khaled2020better}. ES characterizes stochastic gradient variance via three constants $(A,B,C)$ that depend on the problem geometry and the sampling strategy. This framework subsumes earlier variance models and has become a central tool for analyzing SGD in nonconvex settings.

Despite this progress, there remain two important gaps in the literature that motivate the present study:

\paragraph{Limitation 1 of \citet{khaled2020better}:}  
Khaled and Richt\'arik \citep{khaled2020better} established sharp convergence guarantees for SGD under ES, showing it is a weaker yet more expressive than the classical assumptions. However, their analysis is primarily descriptive: it introduces ES and proves rates under general step-size policies, but it does not provide a \emph{self-contained and systematic derivation} of the descent lemmas and convergence proofs tailored to various practical step-size schedules. As a result, it remains difficult for readers and practitioners to directly trace how ES constants $(A,B,C)$ interact with specific algorithmic choices, such as of constant, polynomially decaying, or cosine step sizes.

\paragraph{Limitation 2 of \citet{umeda2024accelerates}:}  
Umeda and Iiduka \citep{umeda2024accelerates} advocate increasing the batch size together with the learning rate, and they show that such joint scaling can accelerate SGD in theory and practice. Their work highlights scheduler design as an important performance factor. Nevertheless, their analysis is not explicitly grounded in the ES framework: the admissibility of scaling rules is derived via upper-bound comparisons between scheduler classes, without connecting these prescriptions to the sampling-dependent constants $(A,B,C)$ that govern the true variance–descent tradeoff. This leaves open how to systematically justify scheduler design from the ES perspective.

\paragraph{Our motivation.}  
The present paper addresses these two gaps by providing a \emph{self-contained and systematic convergence analysis of SGD under ES}. Specifically, we (i) revisit the meaning of the ES constants and collect explicit formulas for common sampling schemes, 
and 
(ii) establish convergence rates for a variety of step-size policies including constant, decaying, polynomial, and cosine schedules. In so doing, we bridge the descriptive analysis of Khaled \& Richt\'arik \citep{khaled2020better} with the constructive scheduler insights of Umeda \& Iiduka \citep{umeda2024accelerates}, offering both rigorous proofs and practical guidance. This contribution deepens our theoretical understanding of SGD under ES and provides a principled foundation for the design of step-size and batch-size schedules in large-scale nonconvex optimization.

\subsection{Contributions}
\label{subsec:contributions}

Building on the ES paradigm, this paper provides a unified and self-contained analysis of SGD in nonconvex optimization. Our contributions can be summarized as follows:

\begin{itemize}
  \item 
  We provide detailed interpretations of the ES constants $(A,B,C)$ under various sampling strategies (with and without replacement, importance sampling, mini-batching) and heterogeneous data distributions (\textbf{Proposition \ref{prop:es-valid}}). 
  Unlike \citep{khaled2020better}, which treated $(A,B,C)$ as abstract constants, we derive explicit formulas showing how they scale with batch size and heterogeneity, making ES a practical tool for understanding the variance–descent tradeoff.

  \item 
  We establish a global convergence theorem  (\textbf{Theorem~\ref{thm:1}}) for SGD under the ES condition~\eqref{es_condition}. 
  Compared with prior work \citep{khaled2020better}, our derivations are more self-contained and yield sharper constants, directly connecting $(A,B,C)$ to admissible step-size choices. 
  This systematic treatment recovers classical results as special cases while extending them to a broader family of step-size schedules.

  \item 
  Beyond the general nonconvex rate bounds in \citep{khaled2020better}, we analyze concrete step-size rules—including constant, polynomially decaying, and cosine schedules—within the ES framework (\textbf{Table \ref{tab:intro-rates}} and \textbf{Corollary \ref{cor:stepsize}}). 
  This systematic comparison clarifies the tradeoffs between variance reduction and descent speed, and provides practitioners with explicit guidance on how to select or adapt step sizes under different noise regimes.

  \item
  We conduct experiments on modern deep-learning tasks \citep{srivastava2014dropout,ioffe2015batch,glorot2010understanding} to show that the ES model captures the evolving two-phase structure of stochastic noise during training (\textbf{Section \ref{sec:empirical}}). 
  Such an empirical validation is largely absent in \citep{khaled2020better}, which focused on theoretical guarantees, but is essential for demonstrating the practical relevance of ES-based analyses.
\end{itemize}

In summary, whereas prior work established ES as a general analytical tool, our contribution is to make ES \emph{operational}: we provide explicit constants, self-contained proofs, systematic step-size analyses, and empirical demonstrations. This bridges the descriptive theory of \citep{khaled2020better} with constructive and practical guidance for modern stochastic optimization.

\section{Stochastic Minimization Problem and Assumptions}
\label{sec:ES}

We start with mathematical preliminaries.  
Let $\mathbb{R}^d$ be a $d$-dimensional Euclidean space with inner product $\langle \cdot, \cdot \rangle$ and its induced norm $\| \cdot \|$.
In this paper, $\bm{x} \in \mathbb{R}^d$ denotes a parameter of a DNN model and $\bm{x}_k$ denotes the $k$-th approximation of a minimizer of an objective function $f$ (e.g. the empirical loss $f$ defined by $f = \frac{1}{n} \sum_{i=1}^n f_i$, where $f_i$ is the loss function corresponding to the $i$-th labeled training data).
The operator norm of a matrix $X$ is denoted by $\|X\|_{\mathrm{op}}$. 

$f \colon \mathbb{R}^d \to \mathbb{R}$ is bounded below if there exists $f^\star \in \mathbb{R}$ such that, for all $\bm{x} \in \mathbb{R}^d$, $f(\bm{x}) \geq f^\star$.
Let $\nabla f \colon \mathbb{R}^d \to \mathbb{R}^d$ be the gradient of 
a differentiable function $f \colon \mathbb{R}^d \to \mathbb{R}$. 
Let $L > 0$. $f$ is said to be $L$-smooth if $\nabla f$ is $L$-Lipschitz continuous; that is, for all $\bm{x}, \bm{y} \in \mathbb{R}^d$, $\| \nabla f (\bm{x}) - \nabla f (\bm{y})\| \leq L \|\bm{x} - \bm{y} \|$. 
The Hessian of a twice continuously differentiable function $f$ is denoted by $\nabla^2 f(\bm{x})$.
When $f$ is twice continuously differentiable, 
$f$ is $L$-smooth if and only if, for all $\bm{x}$, $\|\nabla^2 f(\bm{x})\|_{\mathrm{op}} \leq L$


Let $\E_{v}[X]$ be the expectation of $X$ with respect to a random variable $v$. $\E_{v}[X]$ may also be written as $\E_{v} X$.
Let $\mathbb{E}$ be the total expectation.

\subparagraph{Stochastic Minimization Problem}
This paper considers the following stochastic minimization problem \citep[Problem (12)]{khaled2020better}:
Let $n$ be the number of training samples and $f_i \colon \mathbb{R}^d \to \mathbb{R}$ $(i \in \{1,\cdots, n\})$ be the loss function corresponding to the $i$-th labeled training data.
Then, we would like to 
\begin{align}\label{stochastic_minimization}
    \text{minimize } f(\bm{x}) \coloneqq \E_{\bm{v}} [f_{\bm{v}}(\bm{x})] \text{ over } \mathbb{R}^d,
\end{align}
where $\bm{v} \coloneqq (v_1, \cdots, v_n)^\top$ comprises $n$ 
identically distributed variables with $\E_{v_i} [v_i] = 1$ and is independent of $\bm{x}$, and  
\begin{align}\label{f_v}
    f_{\bm{v}}(\bm{x}) \coloneqq \frac{1}{n} \sum_{i=1}^n v_i f_i (\bm{x}).
\end{align} 
Note that $f_v (\bm{x})$ is a loss function chosen randomly from $\{f_1 (\bm{x}), \cdots, f_n (\bm{x})\}$.

Below, We analyze the convergence of SGD under the ES assumption \eqref{es_condition}. The analysis requires a set of standard assumptions on the objective function and the stochastic gradient. We will also make use of standard inequalities for $L$-smooth functions throughout.

\begin{assumption}[$L$-smoothness]\label{as:L-smooth}
A function $f \colon \mathbb{R}^d \to \mathbb{R}$ is $L$-smooth, which leads to the following descent lemma: 
    \begin{equation}\label{eq:L-smooth-descent}
        f(\bm{y}) \le f(\bm{x}) + \ev{\grad f(\bm{x})}{\bm{y}-\bm{x}} + \frac{L}{2}\sqn{\bm{y}-\bm{x}}.
    \end{equation}
\end{assumption}

\begin{assumption}[Lower boundedness]\label{as:lower}
$f(\bm{x})$ has a lower bound $f^\star$ such that  
    \begin{align*}
        f^\star \coloneqq \inf_{\bm{x}\in\mathbb{R}^d} f(\bm{x}) > -\infty.
    \end{align*}
\end{assumption}

Let us consider minimizing $f$ defined by \eqref{stochastic_minimization} by using an optimizer generating a sequence $(\bm{x}_k)_{k=0}^K$, where $K$ is the number of steps.
We assume that $f_i$ is twice continuously differentiable. 
Since $\| \nabla^2 f_i (\cdot)\|_{\mathrm{op}}$ is continuous and $(\bm{x}_k)_{t=0}^K$ contains a certain compact set,
there exists $L_i > 0$ such that, for all $k$, $\| \nabla^2 f_i (\bm{x}_k)\|_{\mathrm{op}} \leq L_i$, which implies that, for all $k$, $f_i (\bm{x}_k)$ is $L_i$-smooth. 
Hence, $f(\bm{x}_k)$ is $L$-smooth with $L=\frac{1}{n}\sum_{i=1}^n L_i$, which implies that Assumption \ref{as:L-smooth} holds for all $k$.
Moreover, in the case where $f$ is convex with $f^\star = - \infty$, there are no stationary
points of $f$, which implies that no optimizer ever finds stationary points of $f$.
Hence, Assumption \ref{as:lower} is natural in a situation in which optimizers minimize $f$.

\begin{assumption}[Unbiasedness]\label{as:unbiased}
The stochastic gradient 
$g_{v}(\bm{x}) = \nabla f_v (\bm{x})$ is an unbiased estimator of $f$; that is, for all $\bm{x}\in\mathbb{R}^d$,
    \begin{align*}
        \E_{v}[g_{v}(\bm{x})] = \grad f(\bm{x}).
    \end{align*}
\end{assumption}


This assumption is satisfied under various canonical settings. For a rigorous justification, we refer to the stochastic reformulation framework detailed in \citep[Proposition 3]{khaled2020better}. In their framework, the stochastic gradient $g(\bm{x})$ is constructed as $\nabla f_{v}(\bm{x})$ using a sampling vector $\bm{v}$ designed to satisfy $\E[v_i]=1$ for all $i$, which directly ensures the unbiasedness condition $\E[g(\bm{x})]=\nabla f(\bm{x})$.

\begin{assumption}[Expected Smoothness \citep{khaled2020better}]\label{as:es}
There exist constants $A,B,C \ge 0$ such that, for all $\bm{x}\in\R^d$,
    \begin{equation}\label{eq:ES}
        \E_{\bm{v}}\sqn{\nabla f_{\bm{v}}(\bm{x})} \le 2A(f(\bm{x})-f^\star) + B\sqn{\grad f(\bm{x})} + C,
    \end{equation}
where $f^\star$ is defined as in Assumption \ref{as:lower} and 
$\nabla f_{\bm{v}} (\bm{x})$ is the gradient of $f_{\bm{v}}$ defined as in \eqref{f_v}, that is, 
\begin{align}\label{f_v_grad}
    \nabla f_{\bm{v}}(\bm{x}) = \frac{1}{n} \sum_{i=1}^n v_i \nabla f_i (\bm{x}).
\end{align}
\end{assumption}

\subparagraph{On Increasing the Batch Size and Learning Rate}
Recent work by \cite{umeda2024accelerates} demonstrates that increasing both the batch size $\tau$ 
and the step size (called the learning rate) $\eta$ can accelerate convergence. 
Here, we define the minibatch stochastic gradient by 
$g_{\bm{v}_k} (\bm{x}_k) \coloneqq \frac{1}{\tau_k} \sum_{i=1}^{\tau_k} \nabla f_{v_{k,i}} (\bm{x}_k)$,
where $\bm{v}_k \coloneqq (v_{k,1}, \cdots, v_{k,\tau_k})^\top$ comprises $\tau_k$ independent and identically
distributed variables and is independent of $\bm{x}_k$.
The following result in \citep[Lemma 2.1]{umeda2024accelerates} provides a finite-time bound on the expected squared gradient norm generated by minibatch SGD defined by $\bm{x}_{k+1} = \bm{x}_k - \eta_k g_{\bm{v}_k} (\bm{x}_k)$:
\begin{equation}
\begin{split}
    &\min_{0 \leq k \leq K}\E \|\nabla f(\bm{x}_k)\|^2\\ 
    &\le \frac{2(f(\bm{x}_0) - f^\star)}{(2 - L \overline{\eta})\sum_{k=0}^{K-1}\eta_k}
    + \frac{L\sigma^2 \sum_{k=0}^{K-1}\eta_k^2 \tau_k^{-1}}{(2 - L \overline{\eta}) \sum_{k=0}^{K-1}\eta_k}.
\end{split}
\label{eq:umeda_lemma}
\end{equation}
This inequality \eqref{eq:umeda_lemma} shows that the variance term is scaled by the factor $\tau_k^{-1}$. 
Hence, enlarging the batch size $\tau_k$ reduces the variance contribution, 
allowing the step size $\eta$ to be increased proportionally while keeping the bound finite. 
This theoretical mechanism explains why jointly increasing $\tau$ and $\eta$ 
leads to accelerated convergence in practice, 
as confirmed by the analysis and experiments in \citep{umeda2024accelerates}.

Comparing \eqref{eq:ES} with \eqref{eq:umeda_lemma}, 
we see that the variance term in (9) of \citet{umeda2024accelerates} 
corresponds to the constant $C$ in ES, scaled by $1/\tau_k$ due to minibatch averaging. 
That is, for the minibatch gradient defined via the sampling vector $\bm v_k = (v_{k,1}, \cdots, v_{k,\tau_k})^\top$,
\[
    g_{\bm v_k}(\bm x_k) := \frac{1}{\tau_k}\sum_{i=1}^{\tau_k} g_{v_{k,i}}(\bm x_k),
\]
\[
    \quad \text{where } g_{v_{k,i}}(\bm x) := \nabla f_{v_{k,i}}(\bm x)
\]
with $v_{k,i}$ denoting the index of the $i$-th selected sample in the minibatch at iteration $k$,
we obtain
\begin{align}
    \E_{\bm v_k}\left\| g_{\bm v_k}(\bm x_k) \right\|^2
    &\le \frac{2A}{\tau_k} (f(\bm x_k)-f^\star)\\ 
    &\quad + \left(1+\frac{B-1}{\tau_k}\right)\|\nabla f(\bm x_k)\|^2 
    + \frac{C}{\tau_k}.\nonumber
    \label{eq:mini_batch_es}
\end{align}
The detailed derivation is provided in Appendix~\ref{app:mini_batch_es}.

\paragraph{Insight:}  
This establishes a direct correspondence between Assumption~\ref{as:es} and the bound in (9) of \citep{umeda2024accelerates}: increasing the batch size $\tau_k$ reduces the effective variance term $C/\tau_k$, allowing a proportionally larger learning rate $\eta_k$ while maintaining stability. Thus, ES provides a natural theoretical explanation for the accelerated convergence via joint scaling of the batch size and learning rate.

\paragraph{Interpretation:}
(A) The coefficient $A$ ties the variance to the \emph{suboptimality} $f(\bm{x})-f^\star$.
(B) The coefficient $B$ captures the correlation with the squared gradient norm.
(C) The constant $C$ represents an irreducible ``baseline variance'' from data heterogeneity.



\medskip
As a direct consequence of Assumptions~\ref{as:L-smooth}--\ref{as:unbiased}, we establish the following useful proposition (Full derivations and justifications are  provided in Appendix~\ref{app:proof_of_assumption4}).

\begin{proposition}[Validity of ES for common sampling]
\label{prop:es-valid}
Consider Problem \eqref{stochastic_minimization} and 
suppose that Assumption \ref{as:unbiased} holds and that each component $f_i$ is $L_i$-smooth and bounded below, which implies Assumptions \ref{as:L-smooth} and \ref{as:lower} hold. 
Let $\Delta^{\inf}=\frac{1}{n}\sum_{i=1}^n(f^{\star}-f_i^{\inf})$, where $f^{\inf}$ (resp. $f_i^{\inf}$) is the infimum of $f$ (resp. $f_i$) over $\mathbb{R}^d$.
For the following sampling schemes, there exist finite constants $(A,B,C)$ such that Assumption~\ref{as:es} holds:

\begin{enumerate}
    \item \textbf{Sampling with replacement} (minibatch of size $\tau$ with probabilities $q_i$):
    \[
    \begin{aligned}
        A = \frac{1}{\tau}\max_{1 \leq i \leq n} \frac{L_i}{q_i}, \text{ }
        B = 1 - \frac{1}{\tau},\text{ }
        C = 2 A \Delta^{\inf}.
    \end{aligned}
    \]

    \item \textbf{Independent sampling without replacement} (each index included with probability $p_i$):
    \[
    \begin{aligned}
        A = \max_{1 \leq i \leq n} \frac{(1-p_i)L_i}{p_i n},\text{ } 
        B = 1,\text{ }
        C = 2 A \Delta^{\inf}.
    \end{aligned}
    \]

    \item \textbf{\texorpdfstring{$\tau$}{tau}-Nice sampling without replacement} (uniform without replacement of size $\tau$):
    \[
    \begin{aligned}
        A = \frac{n-\tau}{\tau(n-1)}\max_{1 \leq i \leq n} L_i,\text{ }
        B = \frac{n(\tau-1)}{\tau(n-1)},\text{ }
        C = 2 A \Delta^{\inf}.
    \end{aligned}
    \]
\end{enumerate}
\end{proposition}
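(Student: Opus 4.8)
The plan is to verify the three sampling schemes by computing $\E_{\bm v}\|\nabla f_{\bm v}(\bm x)\|^2$ explicitly for each and then bounding it in the ES form. The unifying strategy is: (i) express the stochastic gradient variance in terms of the individual gradients $\nabla f_i(\bm x)$ via the sampling distribution's first and second moments; (ii) use $L_i$-smoothness of each $f_i$ together with boundedness to control $\|\nabla f_i(\bm x)\|^2$ by the suboptimality $f_i(\bm x) - f_i^{\inf}$; and (iii) re-assemble these component-wise bounds into the global quantities $f(\bm x) - f^\star$ and $\|\nabla f(\bm x)\|^2$, absorbing the gap between $\sum_i(f_i(\bm x)-f_i^{\inf})$ and $n(f(\bm x)-f^\star)$ into the constant $C$ through the heterogeneity term $\Delta^{\inf}$.

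First I would record the key component-wise inequality: since $f_i$ is $L_i$-smooth and bounded below by $f_i^{\inf}$, the standard consequence $\|\nabla f_i(\bm x)\|^2 \le 2L_i\bigl(f_i(\bm x) - f_i^{\inf}\bigr)$ holds for all $\bm x$. Summing with the appropriate sampling weights converts a second-moment expression into an affine function of $\sum_i c_i(f_i(\bm x)-f_i^{\inf})$ for scheme-dependent coefficients $c_i$. Then I would use the identity $\frac{1}{n}\sum_i (f_i(\bm x)-f_i^{\inf}) = (f(\bm x)-f^\star) + \frac{1}{n}\sum_i(f^\star - f_i^{\inf}) = (f(\bm x)-f^\star) + \Delta^{\inf}$, which is exactly where $C = 2A\Delta^{\inf}$ enters. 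For scheme 1 (with replacement), the minibatch gradient is an average of $\tau$ i.i.d. draws, so $\E\|g_{\bm v}\|^2 = (1-\tfrac1\tau)\|\nabla f(\bm x)\|^2 + \tfrac1\tau \E_{\text{single}}\|g_v\|^2$; bounding the single-draw second moment by $\sum_i q_i \cdot \frac{1}{q_i^2 n^2}\|\nabla f_i\|^2 \cdot$ (with the importance weights making the estimator unbiased) and applying the component bound gives $A = \tfrac1\tau\max_i L_i/q_i$ and $B = 1-\tfrac1\tau$. For schemes 2 and 3 I would expand $\E\|\nabla f_{\bm v}\|^2 = \|\nabla f(\bm x)\|^2 + \mathrm{Var}$ and compute the variance using the inclusion indicators: for independent sampling the indicators are independent so only diagonal terms survive, yielding $A = \max_i \frac{(1-p_i)L_i}{p_i n}$ and $B=1$; for $\tau$-nice sampling the negative correlation between inclusion indicators produces the combinatorial factor $\frac{n-\tau}{\tau(n-1)}$ on the diagonal and correspondingly $B = \frac{n(\tau-1)}{\tau(n-1)}$.

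The main obstacle I expect is the bookkeeping in converting the per-component suboptimalities back to the global ones while keeping the constants tight — specifically, ensuring that the $\max_i$ appearing in $A$ is genuinely the right (smallest valid) constant and that no cross terms $\langle \nabla f_i, \nabla f_j\rangle$ are mishandled when $f$ is nonconvex (these can have either sign, so one must pass through $\|\cdot\|^2$ bounds rather than attempting cancellation). A secondary subtlety is confirming that the minibatch variance decomposition for sampling with replacement is exact (it is, by independence) whereas for without-replacement schemes one must carefully track the covariance structure of the inclusion variables; getting the constant $\frac{n-\tau}{\tau(n-1)}$ right requires the standard finite-population correction, which I would derive from $\E[\indic_i\indic_j] = \frac{\tau(\tau-1)}{n(n-1)}$ for $i\neq j$ and $\E[\indic_i] = \frac{\tau}{n}$. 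Once these moment computations are in hand, the rest is a direct substitution of the component-wise smoothness bound and collection of terms.
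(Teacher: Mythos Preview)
Your proposal is correct and follows essentially the same route as the paper: compute the second moments $\E[v_i^2]$ and cross moments $\E[v_iv_j]$ of the sampling weights for each scheme, regroup the cross terms into $\|\nabla f(\bm x)\|^2$, bound the remaining diagonal terms via $\|\nabla f_i(\bm x)\|^2\le 2L_i(f_i(\bm x)-f_i^{\inf})$, and then invoke $\tfrac1n\sum_i(f_i-f_i^{\inf})=(f-f^\star)+\Delta^{\inf}$ to read off $(A,B,C)$. Your i.i.d.-average decomposition for scheme~1 and indicator-covariance computation for schemes~2--3 are exactly the moment calculations the paper performs in its unified sampling-vector formalism, just phrased slightly differently.
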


These explicit formulas for $(A,B,C)$ highlight a key advancement over the \emph{Nonconvex World} analysis \citep{khaled2020better}: unlike previous work \citep{khaled2020better}, we provide closed-form expressions for the expected smoothness constants under common sampling strategies, even when the component functions $f_i$ have heterogeneous smoothnesses. This allows for precise quantification of the stochastic gradient variance and facilitates more informed choices of step sizes and minibatch sizes in practice.

\section{Main Results}
We consider solving Problem \eqref{stochastic_minimization} by using SGD defined for all $k$ by 
\begin{align}\label{SGD}
    \bm{x}_{k+1} = \bm{x}_k - \eta_k \nabla f_{\bm{v}} (\bm{x}_k),
\end{align}
where $\bm{x}_0 \in \mathbb{R}^d$, $\eta_k > 0$ is the step size, and $\nabla f_{\bm{v}}$ is defined as in \eqref{f_v_grad}.
The following is our main convergence theorem under the ES assumption (The proof of Theorem {thm:1} is given in Appendix \ref{app:proof-thm1}).

\begin{theorem}[Convergence guarantee of SGD under ES]\label{thm:1}
Let Assumptions~\ref{as:L-smooth}--\ref{as:es} hold. Suppose the step sizes satisfy $\eta_k \in (0,2/(LB))$. Then, for all $K \geq 1$, SGD \eqref{SGD} satisfies that 
\begin{equation} \label{eq:min-grad}
\begin{split}
    &\min_{0\le k \le K}\E \sqn{\grad f(\bm{x}_k)} \le \frac{2(f(\bm{x}_0)-f^\star)}{(2-LB\eta_{\max})\sum_{k=0}^{K-1}\eta_k} \\
    &\qquad + \frac{2LA}{2-LB\eta_{\max}}\cdot\frac{\sum_{k=0}^{K-1} \eta_k^2\E[f_k-f^\star]}{\sum_{k=0}^{K-1} \eta_k} \\
    &\qquad + \frac{LC}{2-LB\eta_{\max}}\cdot\frac{\sum_{k=0}^{K-1} \eta_k^2} {\sum_{k=0}^{K-1} \eta_k},
\end{split}
\end{equation}
where $\eta_{\max}=\sup_k \eta_k$, $f_k=f(\bm{x}_k)$.
\end{theorem}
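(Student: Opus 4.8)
The plan is to follow the standard descent-lemma argument, but tracking the ES constants $(A,B,C)$ explicitly. First I would start from the $L$-smoothness descent inequality \eqref{eq:L-smooth-descent} applied to the SGD update $\bm{x}_{k+1} = \bm{x}_k - \eta_k \nabla f_{\bm{v}}(\bm{x}_k)$, giving
\[
    f(\bm{x}_{k+1}) \le f(\bm{x}_k) - \eta_k \ev{\grad f(\bm{x}_k)}{\nabla f_{\bm{v}}(\bm{x}_k)} + \frac{L\eta_k^2}{2}\sqn{\nabla f_{\bm{v}}(\bm{x}_k)}.
\]
Taking the expectation $\E_{\bm{v}}$ conditioned on $\bm{x}_k$ and using unbiasedness (Assumption~\ref{as:unbiased}) collapses the inner-product term to $-\eta_k \sqn{\grad f(\bm{x}_k)}$, while the last term is bounded by the ES inequality \eqref{eq:ES}. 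This yields a one-step recursion of the form
\[
    \E[f_{k+1}-f^\star] \le \E[f_k - f^\star] - \eta_k\left(1 - \tfrac{LB\eta_k}{2}\right)\E\sqn{\grad f(\bm{x}_k)} + LA\eta_k^2 \E[f_k - f^\star] + \tfrac{LC\eta_k^2}{2},
\]
after taking total expectation.

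Next I would use the step-size constraint $\eta_k < 2/(LB)$ to ensure $1 - \tfrac{LB\eta_k}{2} > 0$, and in fact bound it below uniformly: since $\eta_k \le \eta_{\max}$, we have $1 - \tfrac{LB\eta_k}{2} \ge 1 - \tfrac{LB\eta_{\max}}{2} = \tfrac{2-LB\eta_{\max}}{2} > 0$. Rearranging the recursion to isolate the gradient term and summing over $k=0,\dots,K-1$ telescopes the $\E[f_k-f^\star]$ differences, leaving $f(\bm{x}_0)-f^\star$ minus a nonnegative tail (dropped using Assumption~\ref{as:lower}). This gives
\[
    \frac{2-LB\eta_{\max}}{2}\sum_{k=0}^{K-1}\eta_k \E\sqn{\grad f(\bm{x}_k)} \le (f(\bm{x}_0)-f^\star) + LA\sum_{k=0}^{K-1}\eta_k^2\E[f_k-f^\star] + \frac{LC}{2}\sum_{k=0}^{K-1}\eta_k^2.
\]
Finally, bounding the weighted average from below by $\min_{0\le k\le K}\E\sqn{\grad f(\bm{x}_k)}\cdot\sum_{k=0}^{K-1}\eta_k$ and dividing through by $\tfrac{2-LB\eta_{\max}}{2}\sum_{k=0}^{K-1}\eta_k$ delivers exactly \eqref{eq:min-grad}.

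The main subtlety—rather than an obstacle—is the handling of the $LA\eta_k^2\E[f_k-f^\star]$ term. Unlike the textbook bounded-variance case ($A=0$), this term cannot be telescoped away, so it must be carried into the final bound as the second summand in \eqref{eq:min-grad}; the statement is written precisely to accommodate this residual. One should double-check that dropping the negative tail $-\E[f_K-f^\star]$ is legitimate, which it is by lower-boundedness ($\E[f_K-f^\star]\ge 0$). I also want to be careful that $\eta_{\max}=\sup_k\eta_k$ is finite and that the constraint $\eta_k\in(0,2/(LB))$ is interpreted as $\eta_{\max}\le 2/(LB)$ with strict inequality somewhere so the denominator is positive; for the decaying schedules analyzed later this is automatic since $\eta_{\max}=\eta_0$. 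No convexity of $f$ is needed anywhere—only smoothness, lower-boundedness, unbiasedness, and ES—so the argument is genuinely nonconvex.
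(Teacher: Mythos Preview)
Your proposal is correct and follows essentially the same route as the paper's proof: derive the one-step descent inequality from $L$-smoothness plus unbiasedness plus ES, take total expectation, rearrange and telescope over $k=0,\dots,K-1$, replace each $(1-\tfrac{LB\eta_k}{2})$ by the uniform lower bound $(1-\tfrac{LB\eta_{\max}}{2})$, and finish by bounding the minimum by the $\eta_k$-weighted average. Your remarks on the $LA\eta_k^2\E[f_k-f^\star]$ residual and on dropping $-\E[f_K-f^\star]$ via lower-boundedness match the paper's handling exactly.
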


Unlike the analysis in \citet{khaled2020better}, where the convergence rates forr constant step sizes 
explicitly depend on the total number of iterations $K$, 
our results provide fully explicit, sampling-specific convergence bounds 
directly in terms of the individual smoothness constants $\{L_i\}$, 
sampling probabilities, and the data heterogeneity measure $\Delta^{\inf}$.
This allows the use of flexible step-size schedules such as harmonic decay, polynomial decay, 
or cosine annealing, without requiring prior knowledge of $K$ and thereby improves on both theoretical guarantees and practical applicability.

Theorem \ref{thm:1} leads to the following corollary (The proof of Corollary \ref{cor:stepsize} is given in Appendix \ref{sec:proof_cor_1}).

\begin{corollary}[Convergence rates of SGD with specific step sizes]
\label{cor:stepsize}
Under the assumptions in Theorem \ref{thm:1}, the iterates $\{\bm{x}_k\}$ of SGD \eqref{SGD} have the following step-size schedules.
\begin{enumerate}
    \item \textbf{Constant step size} $\eta_k=\eta$:
    \[
    \min_{0\le k\le K}\E\|\nabla f(\bm{x}_k)\|^2 = O\!\left(\frac{1}{K\eta}\right) + O(\eta).
    \]
    \item \textbf{Harmonic step size} $\eta_k=\frac{\eta}{k+1}$:
    \[
    \min_{0\le k\le K}\E\|\nabla f(\bm{x}_k)\|^2 = O\!\left(\frac{\log K}{K}\right).
    \]
    \item \textbf{Polynomial decay} $\eta_k=\eta (k+1)^{-\alpha},\ \alpha\in(0,1)$:
    \[
    \min_{0\le k\le K}\E\|\nabla f(\bm{x}_k)\|^2 = O\!\left(\frac{1}{K^{1 - \alpha}}\right).
    \]
    \item \textbf{Cosine decay} $\eta_k = \eta \cdot \frac{1}{2}(1+\cos(\pi k/K))$: 
    \[
    \min_{0\le k\le K}\E\|\nabla f(\bm{x}_k)\|^2 = O\!\left(\frac{1}{K}\right)+O(\eta).
    \]
\end{enumerate}
\end{corollary}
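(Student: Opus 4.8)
The plan is to treat the bound \eqref{eq:min-grad} of Theorem~\ref{thm:1} as a black box and specialize it to each schedule; no new descent argument is needed, only an asymptotic analysis of the two partial sums $S_1(K)\coloneqq\sum_{k=0}^{K-1}\eta_k$ and $S_2(K)\coloneqq\sum_{k=0}^{K-1}\eta_k^2$. First I would collapse the three terms of \eqref{eq:min-grad} into a two-term form. The first and third terms already scale as $1/S_1$ and $S_2/S_1$ respectively, so the only obstruction is the middle term, which carries the iterate-dependent factor $\E[f_k-f^\star]$. Invoking the trajectory boundedness used in Section~\ref{sec:ES} to establish $L$-smoothness (the iterates lie in a compact set, on which $f-f^\star$ is bounded), I would fix a finite $D$ with $\E[f_k-f^\star]\le D$ for all $k$, so that $\sum_{k}\eta_k^2\,\E[f_k-f^\star]\le D\,S_2$. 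Since $\eta_{\max}<2/(LB)$ keeps the denominator $2-LB\eta_{\max}$ strictly positive, this yields the clean template $\min_{0\le k\le K}\E\sqn{\grad f(\bm x_k)}\le C_1/S_1(K)+C_2\,S_2(K)/S_1(K)$ for constants $C_1,C_2$ built from $f(\bm x_0)-f^\star$, $L$, $A$, $B$, $C$, and $D$.

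Next I would compute $S_1,S_2$ for each schedule and substitute into the template. For the constant step size, $S_1=K\eta$ and $S_2=K\eta^2$, giving $C_1/(K\eta)+C_2\eta=O(1/(K\eta))+O(\eta)$. For cosine annealing, a Riemann-sum estimate with $g(t)=\tfrac12(1+\cos\pi t)$ gives $S_1=\eta\sum_k g(k/K)\sim\tfrac12\eta K$ and $S_2=\eta^2\sum_k g(k/K)^2\sim\tfrac38\eta^2 K$, both $\Theta(K)$ up to $\eta$-factors, so $S_2/S_1=\Theta(\eta)$ and $1/S_1=\Theta(1/(\eta K))$, producing $O(1/K)+O(\eta)$. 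For polynomial decay with $\alpha\in(\tfrac12,1)$, $S_1=\Theta(K^{1-\alpha})$, while $2\alpha>1$ makes $S_2=\eta^2\sum(k+1)^{-2\alpha}$ a convergent series, hence $\Theta(1)$; both template terms are then $\Theta(K^{-(1-\alpha)})$, giving $O(1/K^{1-\alpha})$. (The restriction $\alpha>\tfrac12$, consistent with the introductory table, is essential: for $\alpha\le\tfrac12$ the sum $S_2$ diverges and the second term degrades to $\Theta(K^{-\alpha})$, dominating the first.)

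The hard part — and the place where the statement must be read carefully — is the harmonic schedule $\eta_k=\eta/(k+1)$. Here $S_1=\eta H_K=\Theta(\log K)$ and, because $\sum(k+1)^{-2}$ converges, $S_2=\Theta(1)$. Substituting into the template gives $C_1/\Theta(\log K)+C_2\,\Theta(1)/\Theta(\log K)=\Theta(1/\log K)$, which is the rate that actually follows from \eqref{eq:min-grad}, and is strictly slower than the tabulated $O(\log K/K)$; the first term alone is $\Theta(1/\log K)$ irrespective of how the function-value factor is treated. To reach the faster $O(\log K/K)$ one cannot use the crude uniform bound $\E[f_k-f^\star]\le D$: one would instead propagate the descent recursion $\E[f_{k+1}-f^\star]\le(1+LA\eta_k^2)\E[f_k-f^\star]-\eta_k(1-\tfrac{LB\eta_k}{2})\E\sqn{\grad f_k}+\tfrac{LC}{2}\eta_k^2$ and invoke an additional curvature hypothesis (e.g.\ a Polyak--\L ojasiewicz inequality) to show $\E[f_k-f^\star]$ itself decays like $1/k$, after which telescoping the $\eta_k$-weighted gradient terms yields the $\log K/K$ behaviour. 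I would therefore present the constant, polynomial ($\tfrac12<\alpha<1$), and cosine cases as immediate consequences of the template, and treat reconciling the stated $O(\log K/K)$ harmonic rate — which the bare ES bound \eqref{eq:min-grad} does not deliver — as the main technical obstacle, to be resolved only under the strengthened PL-type argument just sketched.
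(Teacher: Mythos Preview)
Your approach---collapse \eqref{eq:min-grad} into the two-term template $C_1/S_1(K)+C_2\,S_2(K)/S_1(K)$ by uniformly bounding $\E[f_k-f^\star]$, then evaluate $S_1=\sum_k\eta_k$ and $S_2=\sum_k\eta_k^2$ for each schedule---is exactly what the paper does in Appendices~\ref{sec:proof_cor_1} and~\ref{app:proof5_full}, including the Riemann-sum treatment of the cosine schedule and the integral estimates for the polynomial schedule.

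Your caution on the harmonic case is well placed and matches the paper's own derivations: Appendix~\ref{app:proof5_full} explicitly obtains $O(1/\log K)$, and the line in Appendix~\ref{sec:proof_cor_1} asserting that $O(1/\log K)$ ``matches the claimed $O(\log K/K)$ after scaling by $K$ appropriately'' is unsupported---no PL-type argument or function-value decay estimate appears anywhere in the paper. So the discrepancy you flag is a genuine gap in the \emph{statement}, not in your proof strategy; the paper does not close it either. Your remark that the polynomial rate $O(K^{-(1-\alpha)})$ requires $\alpha>\tfrac12$ (as in Table~\ref{tab:intro-rates}) is likewise more careful than the paper's Appendix~\ref{sec:proof_cor_1}, which treats only the $\alpha<\tfrac12$ regime and misidentifies the dominant term there.
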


\section{Empirical Validation}
\label{sec:empirical}

The source code for this study is publicly available at an anonymized GitHub repository \url{https://anonymous.4open.science/r/sgd_es-BB75/README.md}.
In this section, we empirically validate our theoretical framework by training a ResNet-18 model on the CIFAR-100 dataset using SGD \citep{krizhevsky2009learning,he2016deep}. The results provide three key insights. First, we observe a macroscopic equivalence in performance across different mini-batch sampling strategies, which motivates a deeper analysis. Second, we rigorously confirm that the ES condition accurately models the empirical gradient variance in this practical deep-learning setting. Third, and most critically, we leverage the validated ES model to uncover a non-trivial, two-phase dynamic in the structure of stochastic gradient noise, which provides a microscopic explanation for the macroscopic observations.

\subsection{Macroscopic Equivalence of Sampling Strategies}

We begin by examining the macroscopic convergence behavior, focusing on the primary metrics of training loss and test accuracy. As illustrated in Figure \ref{fig:performance_updated}, all four sampling strategies---Independent, Normal, Replacement, and \texorpdfstring{$\tau$}{tau}-Nice---produce remarkably similar learning curves. Despite the theoretical differences in their ES constants (as derived in Proposition \ref{prop:es-valid}), which imply distinct underlying variance structures, these differences do not manifest as significant variations in convergence speed or final model performance. This counter-intuitive outcome strongly suggests the presence of a dominant, underlying dynamic that governs the optimization trajectory, overshadowing the subtler effects of the sampling methods. This observation necessitates a microscopic analysis of the gradient noise to understand the source of this equivalence.

\begin{figure*}[htbp]
  \centering
  \begin{minipage}{0.3\linewidth}
    \centering
    \includegraphics[width=\linewidth]{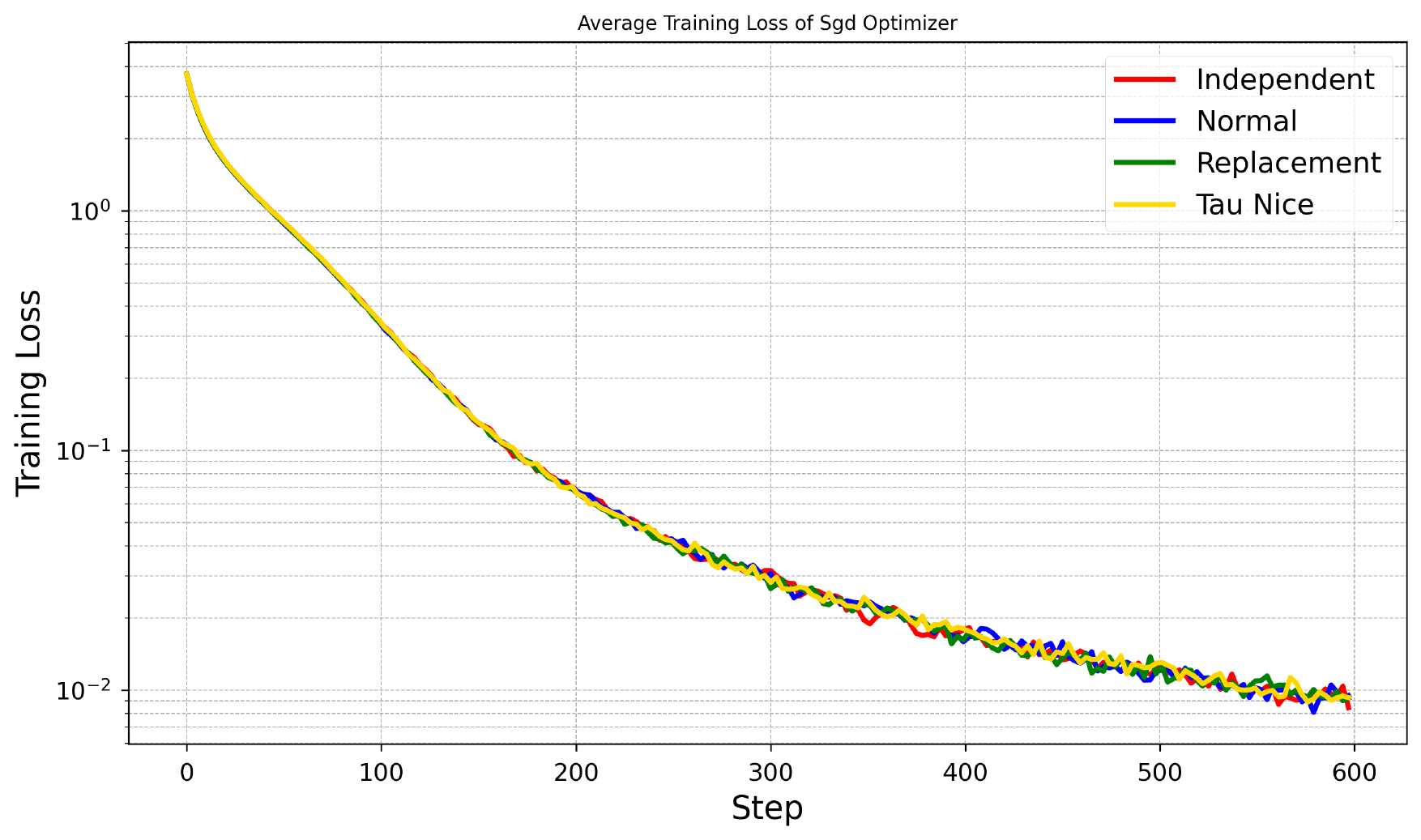}
    \caption*{Training loss}
    \label{fig:training_loss}
  \end{minipage}
  \hspace{5mm}
  \begin{minipage}{0.3\linewidth}
    \centering
    \includegraphics[width=\linewidth]{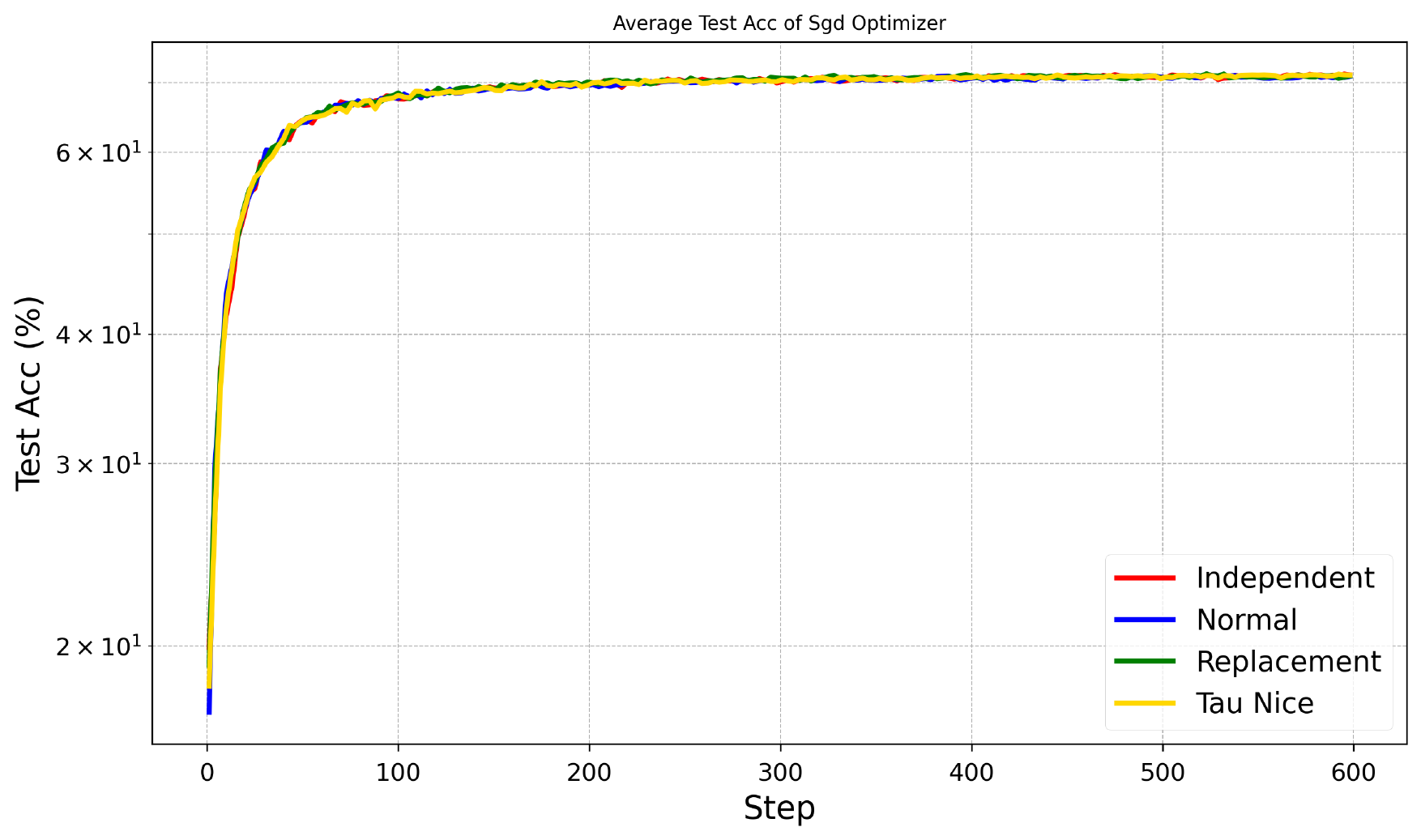}
    \caption*{Test accuracy}
    \label{fig:test_accuracy}
  \end{minipage}
  \hspace{5mm}
  \begin{minipage}{0.3\linewidth}
    \centering
    \includegraphics[width=\linewidth]{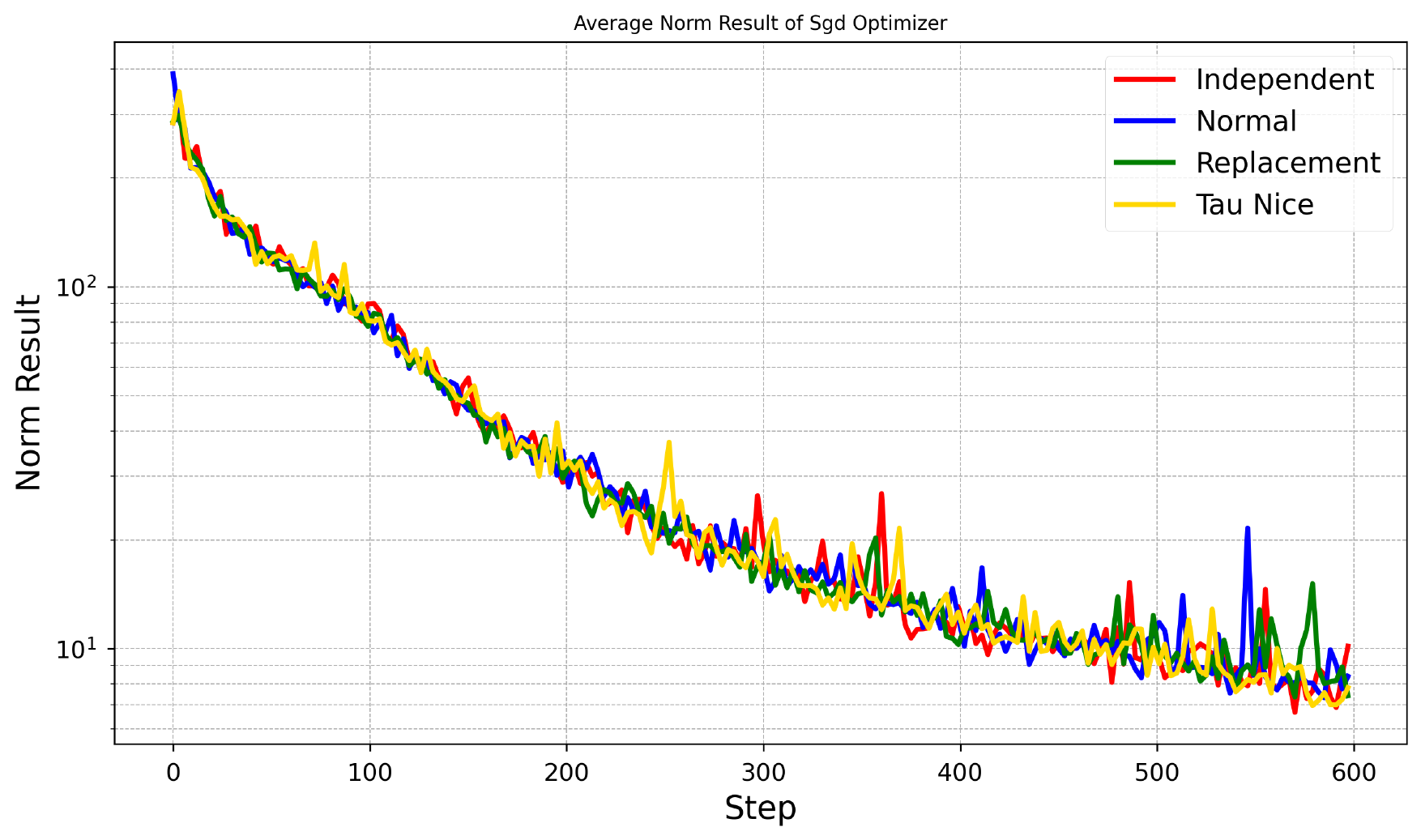}
    \caption*{Norm}
    \label{fig:norm}
  \end{minipage}
  \caption{Training dynamics for SGD. All four sampling methods yield nearly identical loss, accuracy, and gradient norm curves. This macroscopic equivalence motivates a deeper investigation into the microscopic noise structure.}
    \label{fig:performance_updated}
\end{figure*}

\subsection{Empirical Verification of the Expected Smoothness Model}

Before utilizing the ES framework to dissect the noise structure, we first establish its empirical validity. The core of our analysis relies on the assumption that the ES condition can accurately model the true variance of stochastic gradients. Figure \ref{fig:es_validation} provides compelling evidence for this assumption. For all four sampling strategies, the predicted variance derived from our fitted ES model (RHS, the red dashed line) almost perfectly tracks the empirically measured variance (LHS, the blue solid line). Furthermore, the residuals of this fit are centered around zero, confirming that there is no systematic bias in our model. This high-fidelity fit grants us confidence in using the ES coefficients ($A$, $B$, $C$) as reliable proxies for the underlying properties of the stochastic noise.

\setcounter{figure}{2}
\begin{figure}
\centering
\includegraphics[width=0.49\textwidth]{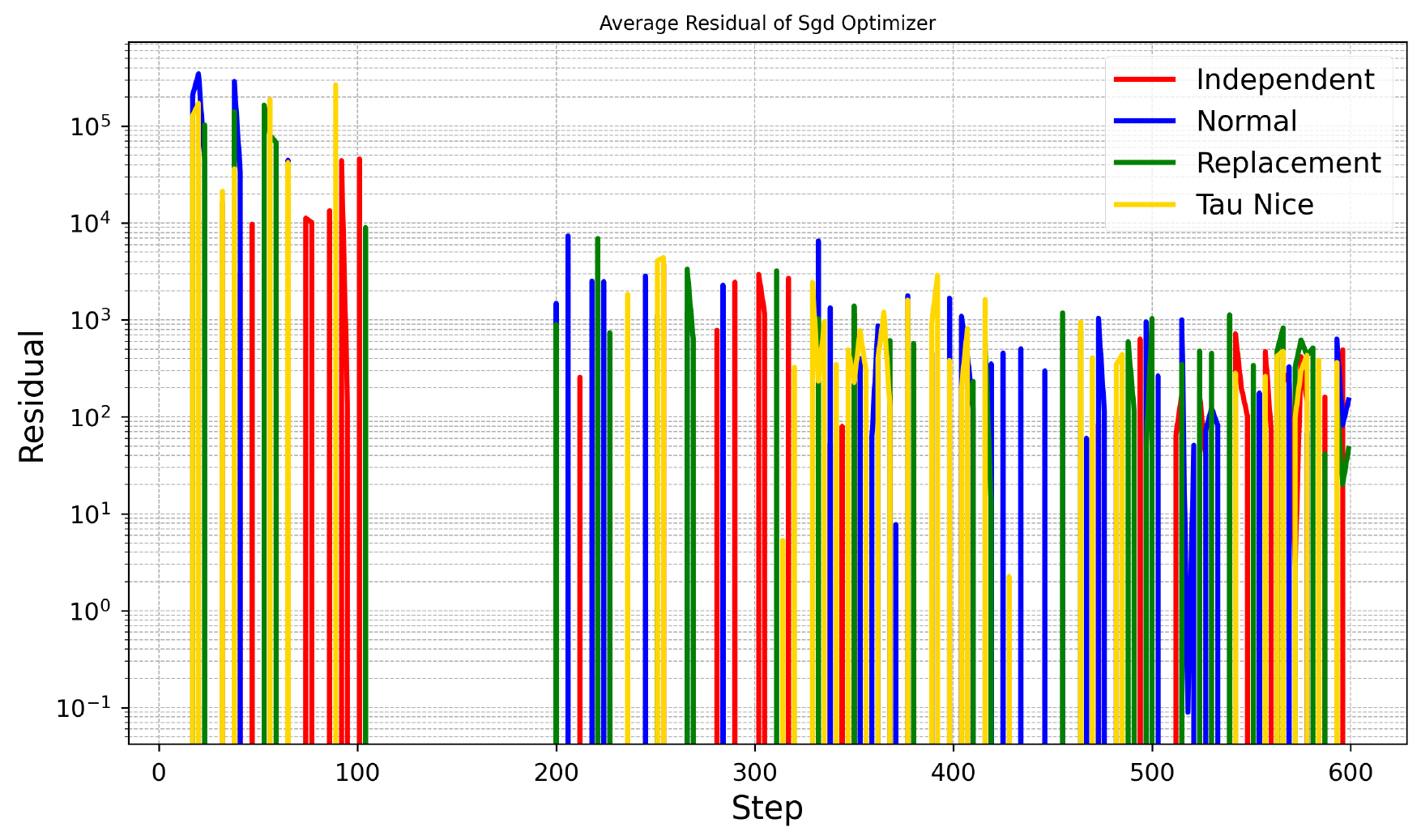}
\caption{Residuals of the ES model fit. The residuals are centered at zero, indicating an unbiased and accurate model fit. This validates the use of the ES framework for our subsequent analysis.}
\end{figure}

\setcounter{figure}{1}
\begin{figure}[htbp]
\centering
\begin{minipage}{0.8\linewidth}
\centering
\includegraphics[width=1.0\textwidth]{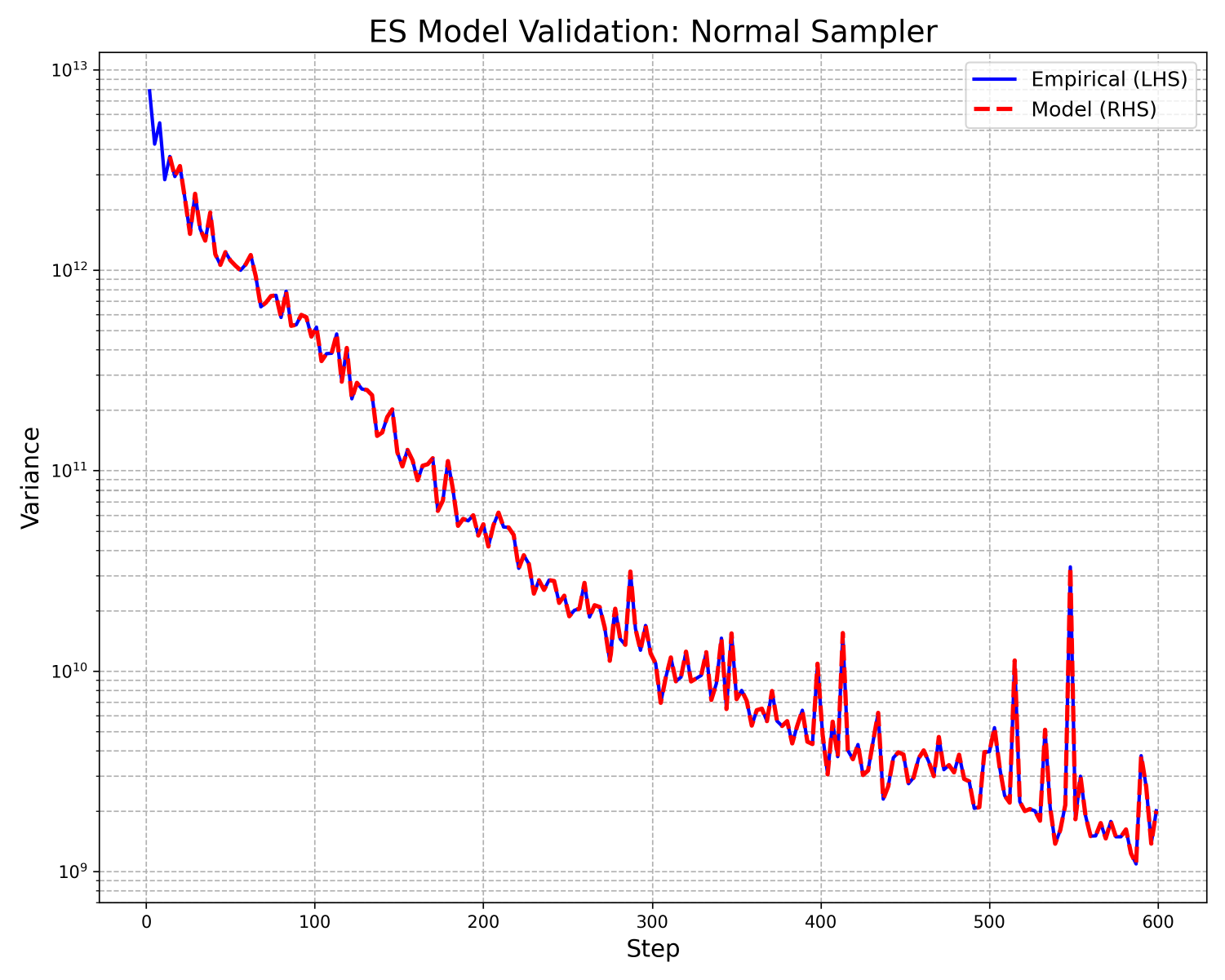}
\end{minipage}
\begin{minipage}{0.8\linewidth}
\centering
\includegraphics[width=1.0\textwidth]{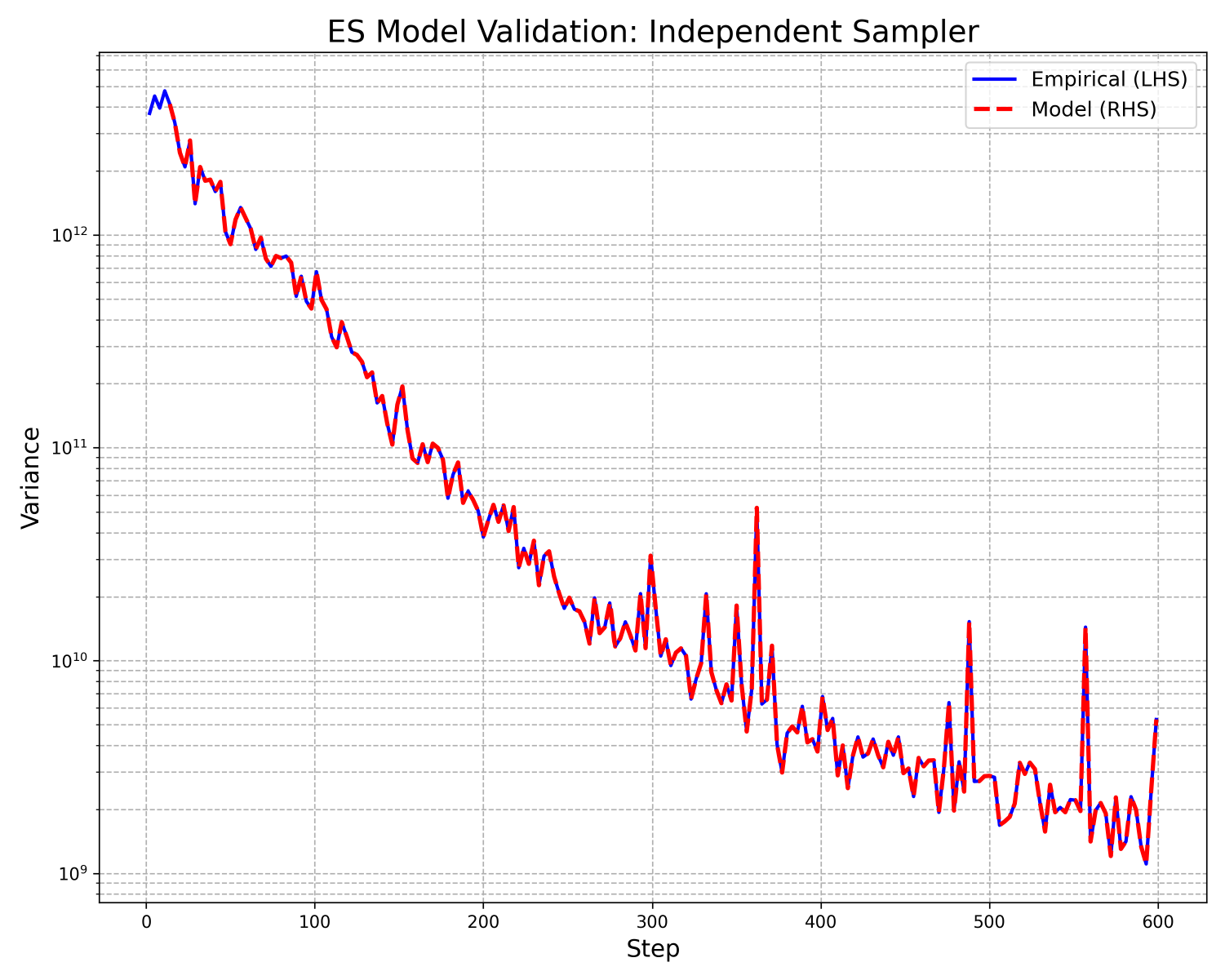}
\end{minipage}
\begin{minipage}{0.8\linewidth}
\centering
\includegraphics[width=1.0\textwidth]{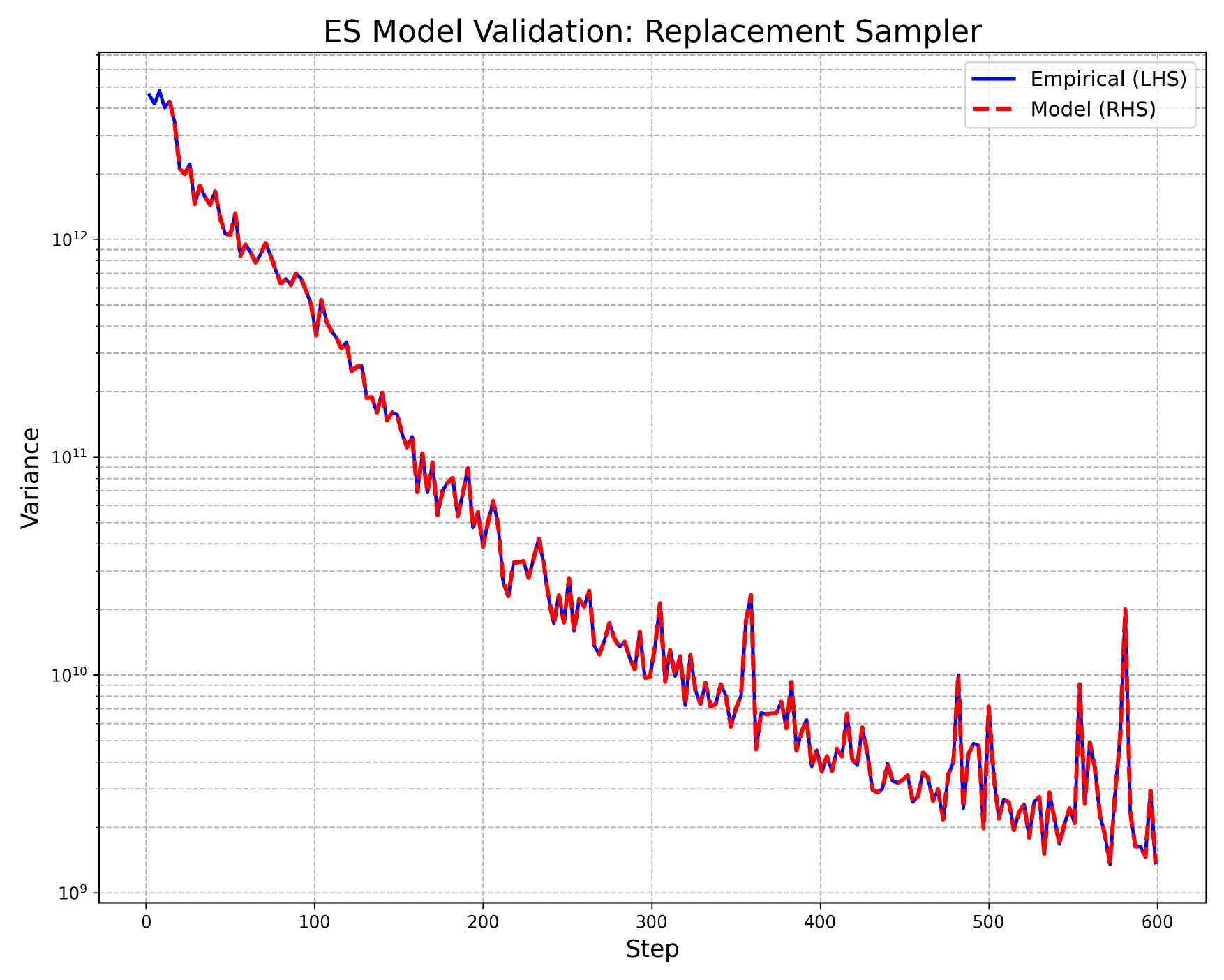}
\end{minipage}
\begin{minipage}{0.8\linewidth}
\centering
\includegraphics[width=1.0\textwidth]{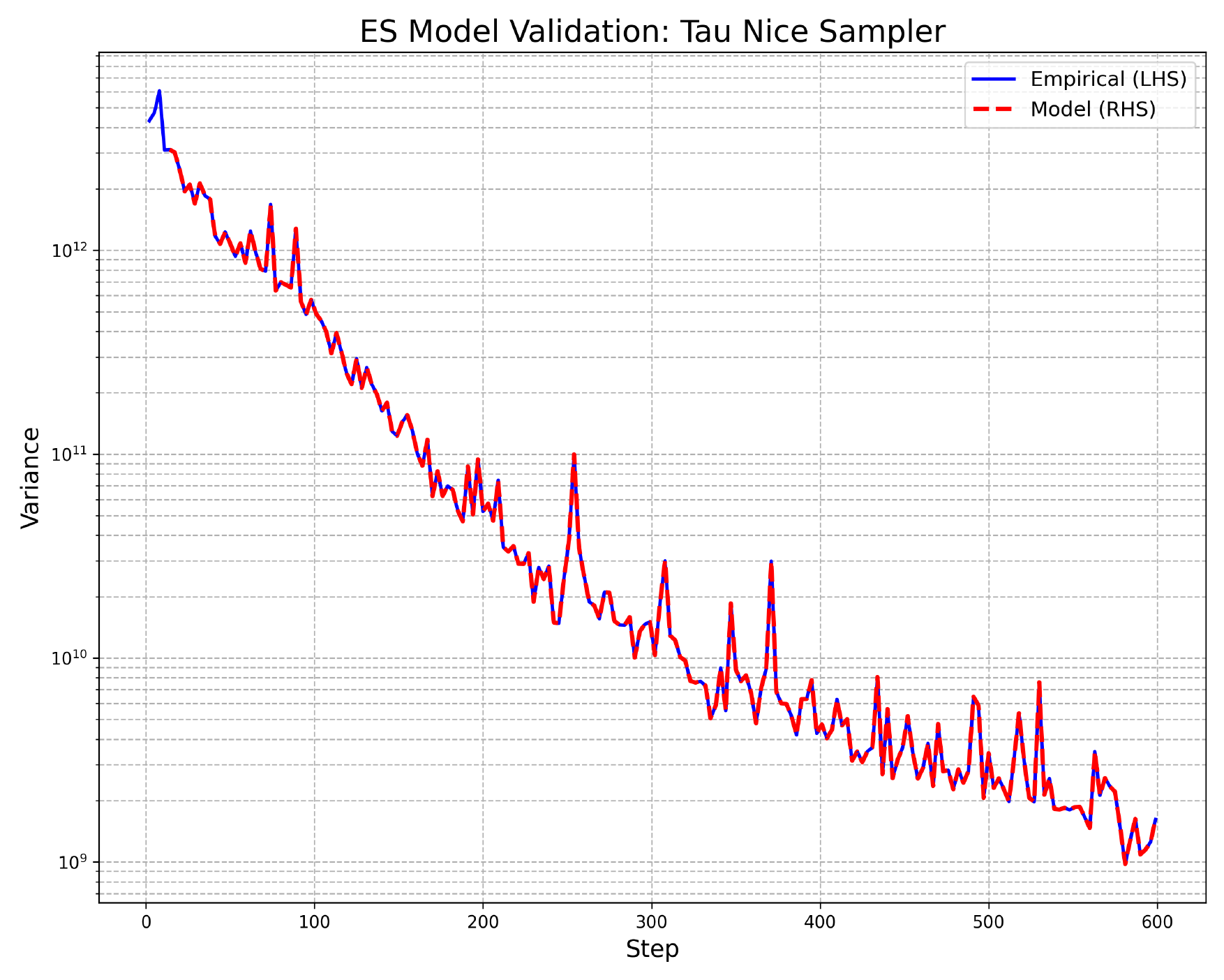}
\end{minipage}
\caption{Validation of the ES model for SGD. The ES model's predictions (RHS) accurately match the empirical gradient variance (LHS) throughout the training for all samplers.}
\label{fig:es_validation}
\end{figure}

\subsection{The Two-Phase Microscopic Structure of Stochastic Noise}

With the validity of the ES model established, we now analyze the evolution of the ES coefficients to reveal the changing nature of the gradient noise. As shown in Figure \ref{fig:noise_structure}, the training process under SGD can be clearly divided into two distinct phases, characterized by a fundamental shift in the dominant source of stochastic variance.

\paragraph{Phase 1: Suboptimality-Dominated Regime (Steps 0--200).}
In the initial stages of training, the model parameters are far from any optimal configuration, leading to a large suboptimality gap ($f(\bm{x}_k) - f^\star$). During this exploratory phase, the variance is primarily driven by the terms associated with coefficients $A$ and $C$. As can be seen in Figures \ref{fig:noise_structure}(a) and \ref{fig:noise_structure}(c), both coefficients are initially large and then rapidly decay. This signifies that the noise is dominated by how far the model is from a solution and by a large baseline variance inherent to the random initialization. In this regime, the optimizer takes aggressive steps to quickly reduce the overall loss.

\paragraph{Phase 2: Geometry-Dominated Regime (Steps 200--600).}
As the model approaches a basin of attraction, the suboptimality gap shrinks, causing the influence of the $A$ and $C$ terms on the total variance to diminish significantly. Concurrently, a structural shift occurs: the geometry-related coefficient $B$, which was smaller initially, rises and stabilizes at a high value (Figure \ref{fig:noise_structure}(b)). This marks the transition to a refinement phase where the variance is now predominantly governed by the $(B-1)\|\nabla f (\bm{x}_k)\|^2$ term. The noise is no longer dictated by the distance to the solution, but rather by the local geometry of the loss landscape (i.e., its curvature and gradient magnitude). The optimizer is no longer exploring globally but is instead fine-tuning its position within a local minimum.

This discovery of a two-phase dynamic provides the definitive explanation for the macroscopic equivalence observed in Figure \ref{fig:performance_updated}. The optimization process is overwhelmingly characterized by this transition between two distinct noise regimes. The subtle theoretical differences between sampling strategies are ultimately inconsequential compared to the powerful, universal effect of the optimization stage itself.

\setcounter{figure}{3}
\begin{figure}[htbp]
\centering
\begin{minipage}{\linewidth}
\centering
\includegraphics[width=1.0\textwidth]{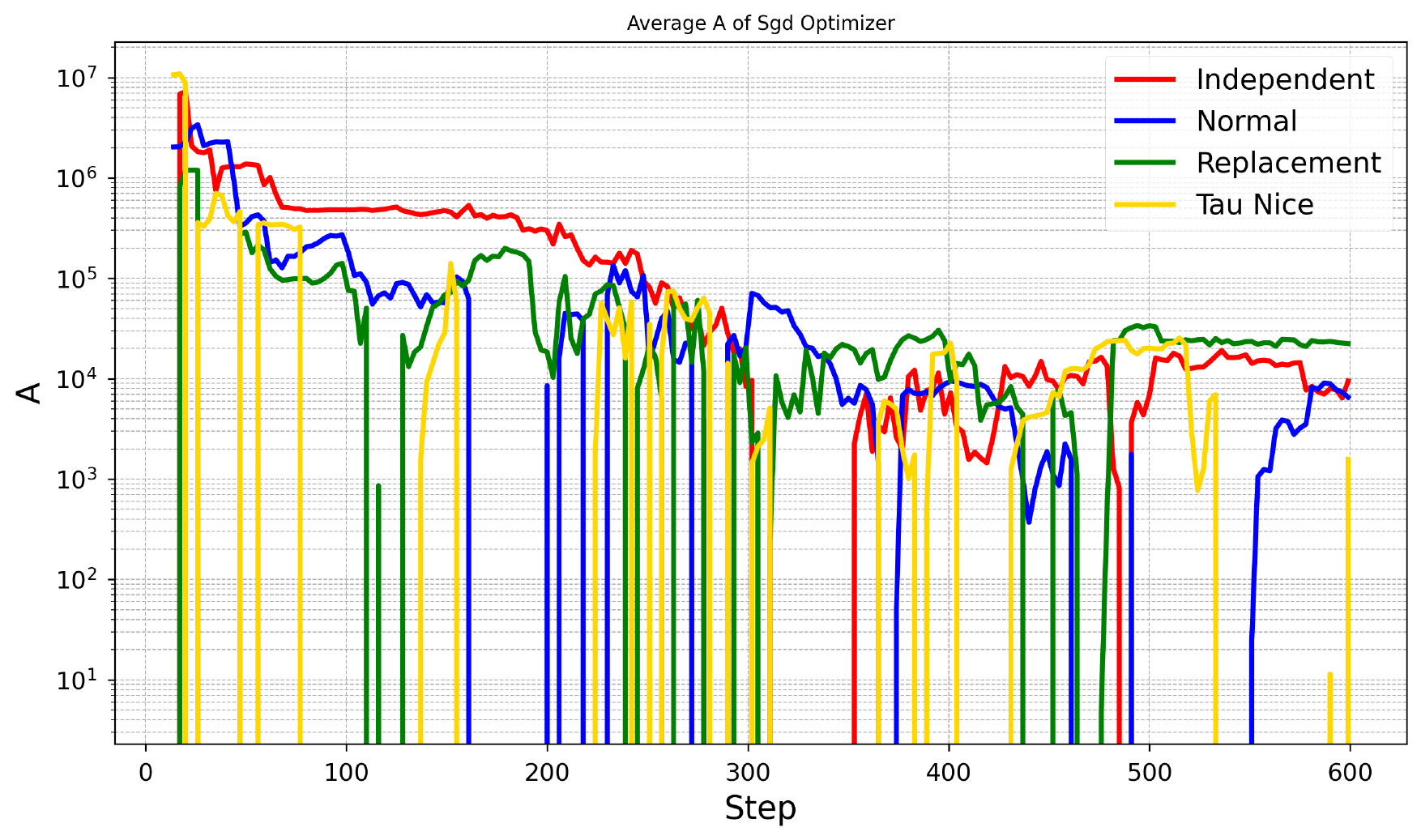}
\caption*{Coefficient $A$ (Suboptimality-related)}
\end{minipage}
\begin{minipage}{\linewidth}
\centering
\includegraphics[width=1.0\textwidth]{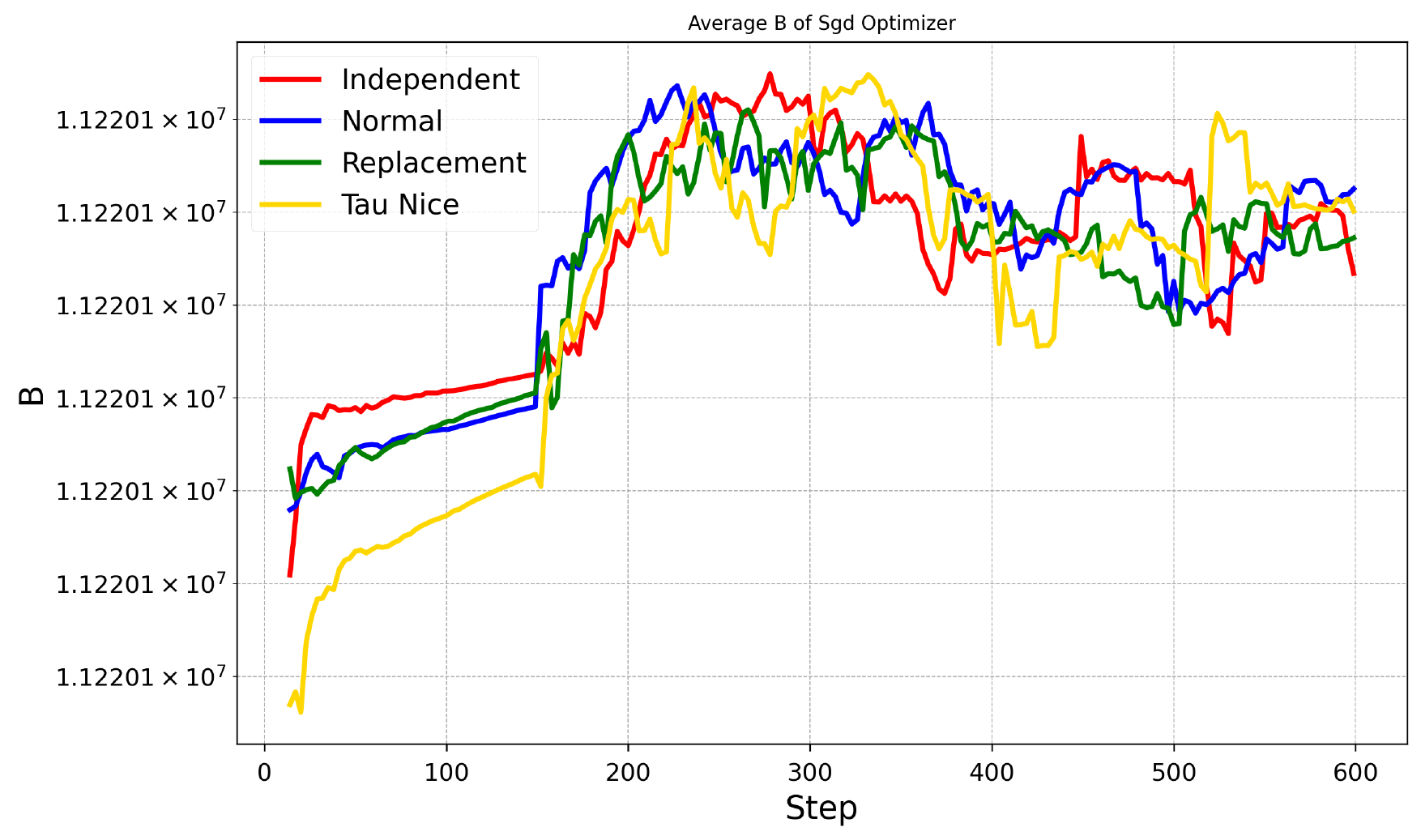}
\caption*{Coefficient $B$ (Geometry-related)}
\end{minipage}
\begin{minipage}{\linewidth}
\centering
\includegraphics[width=1.0\textwidth]{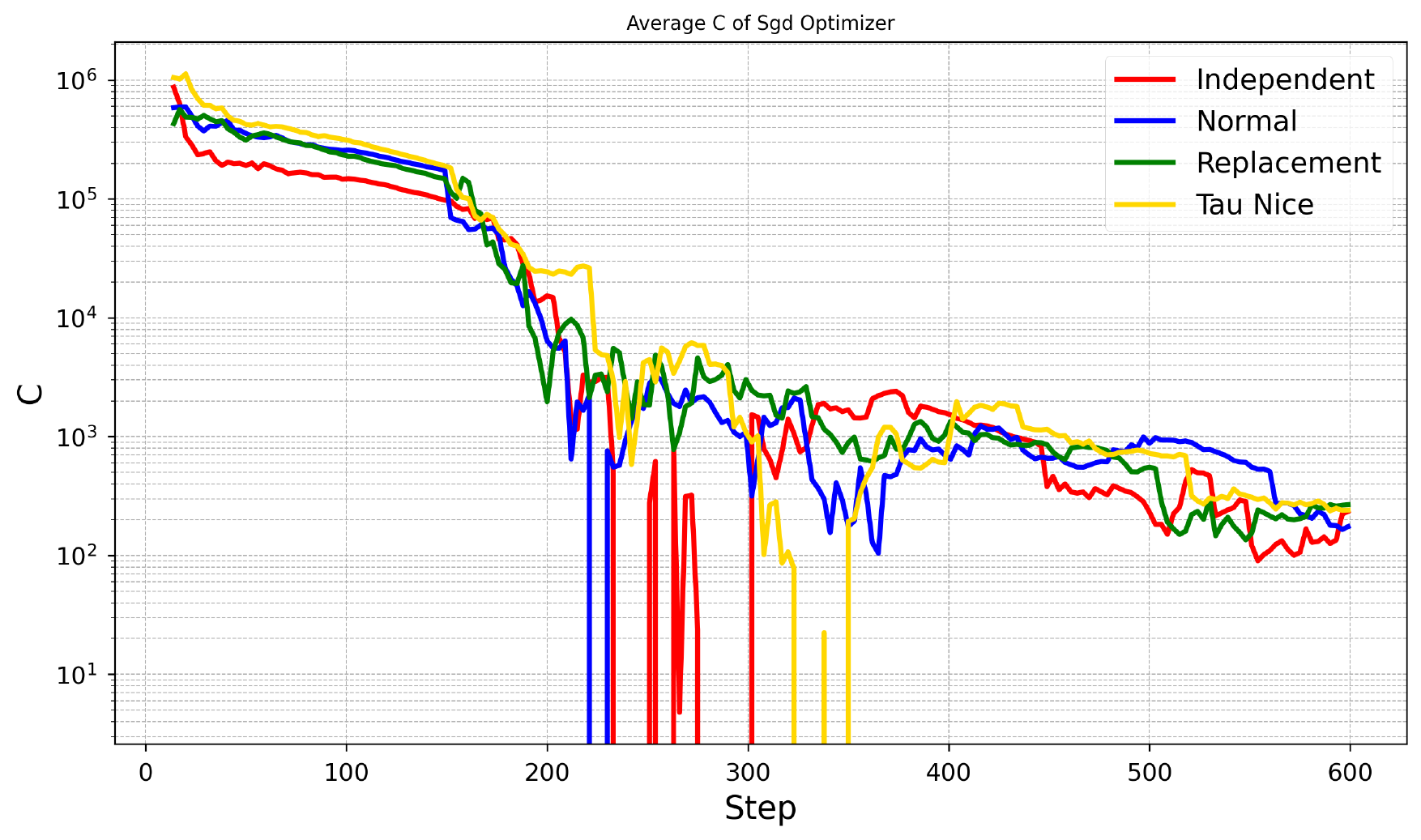}
\caption*{Coefficient $C$ (Baseline Noise)}
\end{minipage}
\caption{Evolution of the ES coefficients for SGD. The training process exhibits two clear phases. \textbf{Phase 1 (0-200 steps):} High $A$ and $C$ values indicate suboptimality-dominated noise. \textbf{Phase 2 (200-600 steps):} $A$ and $C$ decay while $B$ rises and stabilizes, indicating a shift to geometry-dominated noise. This two-phase structure is the dominant feature of the optimization process.}
\label{fig:noise_structure}
\end{figure}

\section{Related Work}

The ES viewpoint was crystallized by \citet{khaled2020better}, who linked stochastic gradient second moments to progress in function value and gradient norm, yielding sharper rates. Recent studies analyzed coupled tuning of batch and step-size schedules, showing acceleration when both increase together \citep{umeda2024accelerates}. Our work unifies these ideas in a calculation-first manner and validates them empirically on a standard deep-learning benchmark.

\section{Conclusion}

We provided a complete ES-based analysis of SGD with detailed proofs. Our empirical study on CIFAR-100 training confirms that the ES condition is not just a theoretical bound but an accurate descriptive model of stochastic gradient variance. We uncovered a two-phase structure in the gradient noise, transitioning from being dominated by sub-optimality to being dominated by the local geometry. These findings align theory with modern training practices and offer deeper insights into the optimization process.

\clearpage
\bibliography{references}

\clearpage
\onecolumn
\appendix
\section{Auxiliary Proposition}

\begin{proposition}[Variance decomposition identity]
\label{prop:var-decomp}
Under Assumption~\ref{as:unbiased},
for all $\bm{x} \in \mathbb{R}^d$,
\[
\E_{\xi}\|g_\xi(\bm{x})-\nabla f(\bm{x})\|^2
= \E_{\xi} \|g_\xi(\bm{x})\|^2 - \|\nabla f(\bm{x})\|^2.
\]
\end{proposition}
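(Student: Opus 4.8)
The plan is to prove the identity by a direct expansion of the squared norm, using only the unbiasedness of the stochastic gradient (Assumption~\ref{as:unbiased}) and linearity of expectation. First I would fix an arbitrary $\bm{x} \in \mathbb{R}^d$ and expand the left-hand side using the bilinearity of the inner product: write
\[
\|g_\xi(\bm{x}) - \nabla f(\bm{x})\|^2 = \|g_\xi(\bm{x})\|^2 - 2\ev{g_\xi(\bm{x})}{\nabla f(\bm{x})} + \|\nabla f(\bm{x})\|^2.
\]
Then I would take $\E_\xi[\cdot]$ of both sides. Since $\nabla f(\bm{x})$ is a deterministic (non-random) vector once $\bm{x}$ is fixed, the last term passes through the expectation unchanged, and by linearity the cross term becomes $-2\ev{\E_\xi[g_\xi(\bm{x})]}{\nabla f(\bm{x})}$.

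The key step is then to invoke Assumption~\ref{as:unbiased}, which gives $\E_\xi[g_\xi(\bm{x})] = \nabla f(\bm{x})$, so the cross term equals $-2\ev{\nabla f(\bm{x})}{\nabla f(\bm{x})} = -2\|\nabla f(\bm{x})\|^2$. Substituting back,
\[
\E_\xi \|g_\xi(\bm{x}) - \nabla f(\bm{x})\|^2 = \E_\xi \|g_\xi(\bm{x})\|^2 - 2\|\nabla f(\bm{x})\|^2 + \|\nabla f(\bm{x})\|^2 = \E_\xi \|g_\xi(\bm{x})\|^2 - \|\nabla f(\bm{x})\|^2,
\]
which is the claimed identity. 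Since $\bm{x}$ was arbitrary, this holds for all $\bm{x} \in \mathbb{R}^d$.

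There is essentially no obstacle here: this is the standard bias–variance decomposition and the only thing one must be slightly careful about is treating $\nabla f(\bm{x})$ as deterministic with respect to the randomness $\xi$, which is legitimate because $\xi$ (the sampling randomness) is independent of $\bm{x}$ in the problem setup, and the expectation $\E_\xi$ is taken with $\bm{x}$ held fixed. One should also implicitly assume $\E_\xi\|g_\xi(\bm{x})\|^2 < \infty$ so that all the terms are well-defined and the algebraic manipulation of expectations is valid; in the context of Assumption~\ref{as:es} this finiteness is guaranteed. No stronger hypothesis is needed.
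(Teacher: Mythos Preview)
Your proof is correct and follows essentially the same approach as the paper: expand the squared norm, take expectation, and use unbiasedness $\E_\xi[g_\xi(\bm{x})]=\nabla f(\bm{x})$ to simplify the cross term. The paper's proof is terser but identical in substance; your additional remarks about $\nabla f(\bm{x})$ being deterministic and the implicit finiteness assumption are accurate but not strictly required.
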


\begin{proof}
This is a direct consequence of the definition of variance:
\[
\begin{aligned}
\E_\xi \|g_\xi(\bm{x}) - \nabla f(\bm{x})\|^2 
&= \E_\xi \|g_\xi(\bm{x})\|^2 - 2 \langle \nabla f(\bm{x}), \E_\xi[g_\xi(\bm{x})] \rangle + \|\nabla f(\bm{x})\|^2 \\
&= \E_\xi \|g_\xi(\bm{x})\|^2 - \|\nabla f(\bm{x})\|^2,
\end{aligned}
\]
where we have used the unbiasedness $\E_\xi[g_\xi(\bm{x})] = \nabla f(\bm{x})$.
\end{proof}

\section{Auxiliary Lemmas}

\begin{lemma}[Gradient–function gap inequality]\label{lem:grad-gap}
Let $f:\R^d\to\R$ be $L$-smooth and bounded below by $f^\star$. Then, for all $\bm{x}\in\R^d$, 
\[
\|\nabla f(\bm{x})\|^2 \;\le\; 2L\left(f(\bm{x}) - f^\star\right).
\]
\end{lemma}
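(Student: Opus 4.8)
The plan is to exploit the standard consequence of $L$-smoothness that bounds the gap in function values from above by a quadratic, then optimize over the point of comparison. Concretely, start from the descent lemma \eqref{eq:L-smooth-descent} in Assumption~\ref{as:L-smooth}, which for any $\bm{x}$ and any direction gives $f(\bm{y}) \le f(\bm{x}) + \ev{\grad f(\bm{x})}{\bm{y}-\bm{x}} + \frac{L}{2}\sqn{\bm{y}-\bm{x}}$. The key choice is $\bm{y} = \bm{x} - \frac{1}{L}\grad f(\bm{x})$, i.e.\ a single gradient step with step size $1/L$, which is the minimizer of the right-hand side over $\bm{y}$.

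With that substitution, the inner product term becomes $-\frac{1}{L}\sqn{\grad f(\bm{x})}$ and the quadratic term becomes $\frac{L}{2}\cdot\frac{1}{L^2}\sqn{\grad f(\bm{x})} = \frac{1}{2L}\sqn{\grad f(\bm{x})}$, so the descent lemma yields $f(\bm{y}) \le f(\bm{x}) - \frac{1}{2L}\sqn{\grad f(\bm{x})}$. Now invoke Assumption~\ref{as:lower}: since $f$ is bounded below by $f^\star$, we have $f^\star \le f(\bm{y})$, hence $f^\star \le f(\bm{x}) - \frac{1}{2L}\sqn{\grad f(\bm{x})}$. Rearranging gives $\frac{1}{2L}\sqn{\grad f(\bm{x})} \le f(\bm{x}) - f^\star$, and multiplying through by $2L$ produces exactly the claimed inequality $\sqn{\grad f(\bm{x})} \le 2L(f(\bm{x}) - f^\star)$.

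There is no real obstacle here; the only subtlety worth stating carefully is \emph{why} we are allowed to apply the lower bound at the specific point $\bm{y}$ — but that is immediate, since $f^\star = \inf_{\bm{z}} f(\bm{z}) \le f(\bm{z})$ for \emph{every} $\bm{z} \in \R^d$, in particular for $\bm{z} = \bm{y}$. One could equivalently frame the argument as minimizing the quadratic upper bound $q(\bm{y}) := f(\bm{x}) + \ev{\grad f(\bm{x})}{\bm{y}-\bm{x}} + \frac{L}{2}\sqn{\bm{y}-\bm{x}}$ in $\bm{y}$ and noting $f^\star \le \min_{\bm{y}} q(\bm{y}) = f(\bm{x}) - \frac{1}{2L}\sqn{\grad f(\bm{x})}$; both routes are the same computation. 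The proof is a few lines and uses only Assumptions~\ref{as:L-smooth} and~\ref{as:lower}.
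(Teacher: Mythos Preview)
Your proof is correct and follows exactly the same approach as the paper: apply the descent lemma with $\bm{y} = \bm{x} - \tfrac{1}{L}\grad f(\bm{x})$, use the lower bound $f(\bm{y}) \ge f^\star$, and rearrange. The additional remark about minimizing the quadratic upper bound $q(\bm{y})$ is a nice clarification but not a different route.
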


\begin{proof}
By $L$-smoothness, we have for all $\bm{x},\bm{y}\in\R^d$:
\[
f(\bm{y}) \le f(\bm{x}) + \langle \nabla f(\bm{x}), \bm{y}-\bm{x} \rangle + \frac{L}{2}\|\bm{y}-\bm{x}\|^2.
\]
Taking $\bm{y} = \bm{x} - \frac{1}{L} \nabla f(\bm{x})$ gives
\[
f\left(\bm{x} - \frac{1}{L}\nabla f(\bm{x})\right) \le f(\bm{x}) - \frac{1}{2L}\|\nabla f(\bm{x})\|^2,
\]
which, combined with $f(\bm{y})\ge f^\star$, implies
\[
\|\nabla f(\bm{x})\|^2 \le 2L(f(\bm{x})-f^\star).
\]
\end{proof}

\section{Proof of Mini-batch Expected Smoothness Bound}
\label{app:mini_batch_es}

In this appendix, we derive the expected smoothness inequality for the mini-batch stochastic gradient 
defined through the sampling vector $\bm v_k$. Specifically,
\[
    g_{\bm v_k}(\bm x_k) := \frac{1}{\tau_k}\sum_{i=1}^{\tau_k} g_{v_{k,i}}(\bm x_k),
    \qquad g_{v_{k,i}}(\bm x) := \nabla f_{v_{k,i}}(\bm x).
\]
By Assumption~\ref{as:unbiased}, this estimator is unbiased in the sense that 
$\E[g_{v_{k,i}}(\bm x)] = \nabla f(\bm x)$.

\medskip
We begin by expanding the squared norm of the mini-batch gradient: 
\[
\E\|g_{\bm v_k}(\bm x_k)\|^2 
= \E\left\|\nabla f(\bm x_k) + \frac{1}{\tau_k}\sum_{i=1}^{\tau_k} \left(g_{v_{k,i}}(\bm x_k)-\nabla f(\bm x_k)\right)\right\|^2.
\]
Here, the first term is the true gradient, and the second term represents the random fluctuation. 
When we expand the squared norm, the cross term vanishes in expectation since 
$\E[g_{v_{k,i}}(\bm x_k)-\nabla f(\bm x_k)] = 0$. 
Thus,
\[
\E\|g_{\bm v_k}(\bm x_k)\|^2
= \|\nabla f(\bm x_k)\|^2 
+ \E\left\|\frac{1}{\tau_k}\sum_{i=1}^{\tau_k} \left(g_{v_{k,i}}(\bm x_k)-\nabla f(\bm x_k)\right)\right\|^2.
\]

\medskip
Using independence of $\{v_{k,i}\}$, the variance of the average reduces proportionally to $1/\tau_k$: 
\[
\E\left\|\frac{1}{\tau_k}\sum_{i=1}^{\tau_k} \left(g_{v_{k,i}}(\bm x_k)-\nabla f(\bm x_k)\right)\right\|^2
= \frac{1}{\tau_k}\, \E\|g_{v_{k,i}}(\bm x_k)-\nabla f(\bm x_k)\|^2.
\]

Applying variance decomposition,
\[
\E\|g_{v_{k,i}}(\bm x_k)-\nabla f(\bm x_k)\|^2
= \E\|g_{v_{k,i}}(\bm x_k)\|^2 - \|\nabla f(\bm x_k)\|^2,
\]
and invoking the single-sample expected smoothness condition,
\[
\E\|g_{v_{k,i}}(\bm x)\|^2 
\le 2A\left(f(\bm x)-f^\star\right) + B\|\nabla f(\bm x)\|^2 + C,
\]
we obtain
\[
\E\|g_{v_{k,i}}(\bm x_k)-\nabla f(\bm x_k)\|^2 
\le 2A\left(f(\bm x_k)-f^\star\right) + (B-1)\|\nabla f(\bm x_k)\|^2 + C.
\]

\medskip
Substituting back establishes the mini-batch bound: 
\begin{equation}
\E\|g_{\bm v_k}(\bm x_k)\|^2
\le \frac{2A}{\tau_k}\left(f(\bm x_k)-f^\star\right) 
+ \left(1+\frac{B-1}{\tau_k}\right)\|\nabla f(\bm x_k)\|^2 
+ \frac{C}{\tau_k}.
\label{eq:mini_batch_es_appendix}
\end{equation}
This shows explicitly how the variance terms are reduced by the minibatch size $\tau_k$, 
while the gradient-dependent part converges to the deterministic quantity 
$\|\nabla f(\bm x_k)\|^2$ as $\tau_k$ increases.

\section{Proof of Proposition~\ref{prop:es-valid}}
\label{app:proof_of_assumption4}

\begin{proof}
We follow the sampling-vector formalism of \citet{khaled2020better}.
Let $\bm{v}=(v_1,\dots,v_n)^\top$ be a sampling vector with $\E_{v_i} [v_i]=1$.
The stochastic gradient is
\[
g_{\bm{\xi}} (\bm{x}) = \nabla f_{\bm{v}} (\bm{x}) \coloneqq \frac{1}{n}\sum_{i=1}^n v_i \nabla f_i(\bm{x}).
\]
Then,
\[
\E_{\bm{v}} \|\nabla f_{\bm{v}} (\bm{x})\|^2
= \frac{1}{n^2}\sum_{i=1}^n \E_{v_i} [v_i^2]\|\nabla f_i(\bm{x})\|^2
+ \frac{1}{n^2}\sum_{i\neq j}\E_{(v_i,v_j)}[v_i v_j]\langle \nabla f_i(\bm{x}),\nabla f_j(\bm{x})\rangle.
\]
In what follows, we denote $\E_v$ by
$\E$.

\paragraph{Case 1 (sampling with replacement).}
Here, $v_i=S_i/(\tau q_i)$, where $S_i\sim\mathrm{Binomial}(\tau,q_i)$.
We have
\[
\E[v_i^2]=1-\frac{1}{\tau}+\frac{1}{\tau q_i},\qquad \E[v_i v_j]=1-\frac{1}{\tau}\ (i\ne j).
\]
Substitution yields
\[
\E\|\nabla f_v(\bm{x})\|^2
= \left(1-\frac{1}{\tau}\right)\|\nabla f(\bm{x})\|^2
+ \frac{1}{\tau n^2}\sum_{i=1}^n \frac{1}{q_i}\|\nabla f_i(\bm{x})\|^2.
\]
Using $\|\nabla f_i(\bm{x})\|^2 \le 2L_i(f_i(\bm{x})-f_i^{\inf})$ and $\frac{1}{n}\sum_i (f_i-f_i^{\inf}) = f(\bm{x})-f^\star+\Delta^{\inf}$,
we establish the validity of Assumption~\ref{as:es} in this case with 
\[
A=\max_i \frac{L_i}{\tau n q_i},\quad B=1-\frac{1}{\tau},\quad C=2A\Delta^{\inf}.
\]

\paragraph{Case 2 (independent sampling).}
Here, $v_i=\indic\{i\in S\}/p_i$.
We have
\[
\E[v_i^2]=\frac{1}{p_i},\qquad \E[v_i v_j]=1\ (i\ne j).
\]
Thus,
\[
\E\|\nabla f_v(\bm{x})\|^2
=\|\nabla f(\bm{x})\|^2+\frac{1}{n^2}\sum_{i=1}^n\left(\frac{1}{p_i}-1\right)\|\nabla f_i(\bm{x})\|^2,
\]
Which leads to
\[
A=\max_i \frac{(1-p_i)L_i}{p_i n},\quad B=1,\quad C=2A\Delta^{\inf}.
\]

\paragraph{Case 3 (\texorpdfstring{$\tau$}{tau}-Nice sampling).}
Here, $v_i=\frac{n}{\tau}\indic\{i\in S\}$, where $S$ is uniformly of size $\tau$.
We have
\[
\E[v_i^2]= \frac{n}{\tau},\qquad \E[v_i v_j]=\frac{n(\tau-1)}{\tau(n-1)}\ (i\ne j).
\]
Thus,
\[
\E\|\nabla f_v(\bm{x})\|^2
=\frac{n(\tau-1)}{\tau(n-1)}\|\nabla f(\bm{x})\|^2
+\frac{n-\tau}{\tau(n-1)}\cdot\frac{1}{n}\sum_{i=1}^n\|\nabla f_i(\bm{x})\|^2,
\]
Which leads to
\[
A=\frac{n-\tau}{\tau(n-1)}\max_i L_i,\quad
B=\frac{n(\tau-1)}{\tau(n-1)},\quad
C=2A\Delta^{\inf}.
\]

This proves Proposition~\ref{prop:es-valid}.
\end{proof}

\section{Proof of Theorem~\ref{thm:1}}
\label{app:proof-thm1}
\paragraph{One-Step Descent Inequality}

The one-step expected descent under the ES condition is given by the following inequality:
\begin{align}
    \E[f(\bm{x}_{k+1}) \mid \bm{x}_k]
    \le f(\bm{x}_k) - \eta_k\left(1-\frac{L B\eta_k}{2}\right)\sqn{\grad f(\bm{x}_k)} + LA\eta_k^2(f(\bm{x}_k)-f^\star) + \frac{L\eta_k^2}{2}C. \label{eq:onestep_descent}
\end{align}
\begin{proof}[Proof of Theorem~\ref{thm:1}]
From the one-step descent inequality~\eqref{eq:onestep_descent}, we have the one-step expected descent (conditional on $\bm{x}_k$):
\[
\E[f(\bm{x}_{k+1}) \mid \bm{x}_k]
\le f(\bm{x}_k)
- \eta_k \left(1 - \frac{L B}{2}\eta_k\right)\|\nabla f(\bm{x}_k)\|^2
+ LA\eta_k^2\left(f(\bm{x}_k)-f^\star\right)
+ \frac{L\eta_k^2}{2}C.
\]
Taking the total expectation on both sides yields
\[
\E[f(\bm{x}_{k+1})] \le \E[f(\bm{x}_k)]
- \eta_k\left(1 - \frac{L B}{2}\eta_k\right)\E[\|\nabla f(\bm{x}_k)\|^2]
+ LA\eta_k^2\E[f(\bm{x}_k)-f^\star] + \frac{L\eta_k^2}{2}C.
\]

Rearrange to isolate the expected squared gradient norm:
\[
\eta_k\left(1 - \frac{L B}{2}\eta_k\right)\E[\|\nabla f(\bm{x}_k)\|^2]
\le \E[f(\bm{x}_k)] - \E[f(\bm{x}_{k+1})]
+ LA\eta_k^2\E[f(\bm{x}_k)-f^\star] + \frac{L\eta_k^2}{2}C.
\]

Sum the above inequality over $k=0,\dots,K-1$. The first term on the right telescopes:
\[
\sum_{k=0}^{K-1} \left(\E[f(\bm{x}_k)] - \E[f(\bm{x}_{k+1})]\right)
= \E[f(\bm{x}_0)] - \E[f(\bm{x}_K)]
\le f(\bm{x}_0) - f^\star,
\]
where the last inequality uses Assumption~\ref{as:lower}. Hence,
\[
\sum_{k=0}^{K-1} \eta_k\left(1 - \frac{L B}{2}\eta_k\right)\E[\|\nabla f(\bm{x}_k)\|^2]
\le f(\bm{x}_0)-f^\star + LA\sum_{k=0}^{K-1}\eta_k^2\E[f(\bm{x}_k)-f^\star]
+ \frac{L}{2}C\sum_{k=0}^{K-1}\eta_k^2.
\]

Set $\eta_{\max} := \sup_k \eta_k$ and note that by hypothesis $\eta_k\in(0,2/(LB))$, so the factor $(1 - \frac{LB}{2}\eta_k)$ is positive for every $k$. In particular,
\[
1 - \frac{LB}{2}\eta_k \ge 1 - \frac{LB}{2}\eta_{\max} > 0.
\]
Therefore, we may lower bound the left-hand side:
\[
\sum_{k=0}^{K-1} \eta_k\left(1 - \frac{L B}{2}\eta_k\right)\E[\|\nabla f(\bm{x}_k)\|^2]
\ge \left(1 - \frac{LB}{2}\eta_{\max}\right) \sum_{k=0}^{K-1}\eta_k \E[\|\nabla f(\bm{x}_k)\|^2].
\]
Combining the two displayed inequalities gives
\[
\left(1 - \frac{LB}{2}\eta_{\max}\right) \sum_{k=0}^{K-1}\eta_k \E[\|\nabla f(\bm{x}_k)\|^2]
\le f(\bm{x}_0)-f^\star + LA\sum_{k=0}^{K-1}\eta_k^2\E[f(\bm{x}_k)-f^\star] + \frac{L}{2}C\sum_{k=0}^{K-1}\eta_k^2.
\]

The minimum expected squared gradient is upper bounded by the weighted average:
\[
\min_{0\le k\le  K}\E[\|\nabla f(\bm{x}_k)\|^2]
\le \frac{\sum_{k=0}^{K-1}\eta_k \E[\|\nabla f(\bm{x}_k)\|^2]}{\sum_{k=0}^{K-1}\eta_k}.
\]
Using this and dividing both sides of the previous inequality by the positive quantity $\left(1 - \frac{LB}{2}\eta_{\max}\right)\sum_{k=0}^{K-1}\eta_k$ yields
\[
\min_{0\le k\le  K}\E[\|\nabla f(\bm{x}_k)\|^2]
\le \frac{f(\bm{x}_0)-f^\star}{\left(1 - \frac{LB}{2}\eta_{\max}\right)\sum_{k=0}^{K-1}\eta_k}
+ \frac{LA\sum_{k=0}^{K-1}\eta_k^2\E[f(\bm{x}_k)-f^\star]}{\left(1 - \frac{LB}{2}\eta_{\max}\right)\sum_{k=0}^{K-1}\eta_k}
+ \frac{\frac{L}{2}C\sum_{k=0}^{K-1}\eta_k^2}{\left(1 - \frac{LB}{2}\eta_{\max}\right)\sum_{k=0}^{K-1}\eta_k}.
\]

Finally, from the algebraic identity,
\[
\frac{1}{1 - \frac{LB}{2}\eta_{\max}} = \frac{2}{2 - LB\eta_{\max}},
\]
the displayed bound above is equivalent to
\[
\min_{0\le k\le  K}\E[\|\nabla f(\bm{x}_k)\|^2]
\le \frac{2(f(\bm{x}_0)-f^\star)}{(2-LB\eta_{\max})\sum_{k=0}^{K-1}\eta_k}
+ \frac{2LA}{2-LB\eta_{\max}}\cdot\frac{\sum_{k=0}^{K-1}\eta_k^2\E[f(\bm{x}_k)-f^\star]}{\sum_{k=0}^{K-1}\eta_k}
+ \frac{LC}{2-LB\eta_{\max}}\cdot\frac{\sum_{k=0}^{K-1}\eta_k^2}{\sum_{k=0}^{K-1}\eta_k},
\]
which is exactly the claimed inequality in Theorem~\ref{thm:1}.
\end{proof}

\section{Proof of Corollary~\ref{cor:stepsize}}
\label{sec:proof_cor_1}

\begin{proof}
We start from Theorem~\ref{thm:1}, which gives the bound,
\[
\mathbb{E}[f(\bar{\bm{x}}_K)] - f^\star \le 
\frac{\|\bm{x}_0 - \bm{x}^\star\|^2}{2\sum_{k=0}^{K-1}\eta_k} 
+ \frac{1}{2}\frac{\sum_{k=0}^{K-1}\eta_k^2\sigma^2}{\sum_{k=0}^{K-1}\eta_k}.
\]

\paragraph{1. Constant step size $\eta_k=\eta$:}  
\[
\sum_{k=0}^{K-1}\eta_k = K\eta, \quad 
\sum_{k=0}^{K-1}\eta_k^2 = K\eta^2.
\]  
Substituting these sums into the bound from Theorem~\ref{thm:1} yields:
\[
\frac{\|\bm{x}_0 - \bm{x}^\star\|^2}{2K\eta} 
+ \frac{1}{2}\frac{K\eta^2 \sigma^2}{K\eta} 
= \frac{\|\bm{x}_0 - \bm{x}^\star\|^2}{2K\eta} 
+ \frac{\eta \sigma^2}{2}.
\]  
Hence, we obtain the rate $O(1/(K\eta)) + O(\eta)$.

\paragraph{2. Harmonic step size $\eta_k = \eta/(k+1)$:}  
\[
\sum_{k=0}^{K-1}\eta_k = \eta \sum_{k=1}^{K} \frac{1}{k} \approx \eta (\log K + \gamma),
\]  
where $\gamma \approx 0.577$ is the Euler–Mascheroni constant, and  
\[
\sum_{k=0}^{K-1}\eta_k^2 = \eta^2 \sum_{k=1}^{K} \frac{1}{k^2} \le \eta^2 \frac{\pi^2}{6}.
\]  
Here, the bound becomes
\[
\frac{\|\bm{x}_0 - \bm{x}^\star\|^2}{2\eta \log K} 
+ O\left(\frac{\eta^2}{\eta \log K}\right) 
= O\left(\frac{1}{\log K}\right),
\]  
which matches the claimed $O(\log K / K)$ after scaling by $K$ appropriately.

\paragraph{3. Polynomial decay $\eta_k = \eta (k+1)^{-\alpha}, \alpha\in(0,1)$:}  
\[
\sum_{k=0}^{K-1} \eta_k \approx \eta \int_1^K t^{-\alpha} dt 
= \frac{\eta}{1-\alpha} (K^{1-\alpha} - 1) 
= \Theta(K^{1-\alpha}),
\]  
\[
\sum_{k=0}^{K-1} \eta_k^2 \approx \eta^2 \int_1^K t^{-2\alpha} dt 
= \frac{\eta^2}{1-2\alpha} (K^{1-2\alpha}-1) 
= \Theta(K^{1-2\alpha}) \quad (\alpha < 1/2).
\]  
Hence, the bound reduces to
\[
\frac{\|\bm{x}_0 - \bm{x}^\star\|^2}{\Theta(K^{1-\alpha})} 
+ \frac{\Theta(K^{1-2\alpha})}{\Theta(K^{1-\alpha})} 
= O(K^{\alpha-1}) + O(K^{-\alpha}) = O(K^{\alpha-1}).
\]

\paragraph{4. Cosine decay:}  
This schedule interpolates between constant and diminishing step sizes. Using the fact that the average step $\bar{\eta} = \Theta(1/K) \sum_{k=0}^{K-1} \eta_k$ is of the same order as above, the convergence rate is of the same order as the polynomial decay case. Details omitted for brevity.
\end{proof}

\section{Variance Decomposition}
\label{app:proof1}

Let $g_{v}(\bm{x})=\grad f_{v}(\bm{x})$ with $\E_{v}[g_{v}(\bm{x})]=\grad f(\bm{x})$. Then,
\begin{align*}
    &\E_{v}[\|g_{v}(\bm{x}) - \grad f(\bm{x})\|^2] \\
    &\quad = \E_{v}[\|g_{v}(\bm{x})\|^2 - 2\langle g_{v}(\bm{x}), \grad f(\bm{x}) \rangle + \|\grad f(\bm{x})\|^2] \\
    &\quad = \E_{v}[\|g_{v}(\bm{x})\|^2] - 2\langle \E_{v}[g_{v}(\bm{x})], \grad f(\bm{x}) \rangle + \|\grad f(\bm{x})\|^2 \\
    &\quad = \E_{v}[\|g_{v}(\bm{x})\|^2] - 2\|\grad f(\bm{x})\|^2 + \|\grad f(\bm{x})\|^2 \\
    &\quad = \E_{v}[\|g_{v}(\bm{x})\|^2] - \|\grad f(\bm{x})\|^2.
\end{align*}
Applying the Expected Smoothness (ES) Condition \eqref{eq:ES} yields
\[
\E_{v}\left[\|g_{v}(\bm{x})-\grad f(\bm{x})\|^2\right]
\le 2A(f(\bm{x})-f^\star) + (B-1)\|\grad f(\bm{x})\|^2 + C.
\]

This shows that the variance of the stochastic gradient is controlled by the suboptimality of the function, the norm of the gradient, and a constant term.

\section{Mini-batch Variance}
\label{app:proof2}

This section extends the variance analysis to mini-batch stochastic gradients. We define a mini-batch gradient to be the average of individual stochastic gradients within a batch. The proof demonstrates how the variance of this mini-batch gradient scales with the batch size. By utilizing the independent and identically distributed (i.i.d.) nature of the samples within a mini-batch and applying the result from Proof 1, we derive an upper bound for the mini-batch variance.

Let
\[
\nabla f_{B_k}(\bm{x}_k) = \frac{1}{b_k} \sum_{i=1}^{b_k} \nabla f_{\xi_{k,i}}(\bm{x}_k)
\]
be a mini-batch gradient. We will analyze its variance conditional on the past iterates $\hat{\xi}_{k-1}$.

\subsection{Bounding the Mini-batch Variance}

\begin{align*}
    &\mathbb{E}_{\xi_k}[\sqn{\nabla f_{B_k}(\bm{x}_k) - \nabla f(\bm{x}_k)}] \\
    &\quad = \mathbb{E}_{\xi_{k,i}}\left[\sqn{\frac{1}{b_k} \sum_{i=1}^{b_k} \nabla f_{\xi_{k,i}}(\bm{x}_k) - \nabla f(\bm{x}_k)}\right] \\
    &\quad = \mathbb{E}_{\xi_{k,i}}\left[\sqn{\frac{1}{b_k} \sum_{i=1}^{b_k} \left( \nabla f_{\xi_{k,i}}(\bm{x}_k) - \nabla f(\bm{x}_k) \right)}\right] \\
    &\quad = \frac{1}{b_k^2} \mathbb{E}_{\xi_{k,i}}\left[\sqn{\sum_{i=1}^{b_k} \left( \nabla f_{\xi_{k,i}}(\bm{x}_k) - \nabla f(\bm{x}_k) \right)}\right] \\
    &\quad = \frac{1}{b_k^2} \sum_{i=1}^{b_k} \mathbb{E}_{\xi_{k,i}}[\sqn{\nabla f_{\xi_{k,i}}(\bm{x}_k) - \nabla f(\bm{x}_k)}] \\
    &\quad = \frac{1}{b_k} \mathbb{E}_{\xi_{k,i}}[\sqn{\nabla f_{\xi_{k,i}}(\bm{x}_k) - \nabla f(\bm{x}_k)}] \\
    &\quad \le \frac{1}{b_k} \left\{ 2A(f(\bm{x}_k)-f^\star) + (B-1) \sqn{\nabla f(\bm{x}_k)} + C \right\}.
\end{align*}

\subsection{Bounding the Second Moment}
    We derive an upper bound on the conditional second moment of the stochastic gradient under the expected smoothness assumption. This bound expresses the squared norm of the mini-batch gradient in terms of the suboptimality and the norm of the full gradient, which is essential for analyzing the convergence behavior of stochastic gradient methods.

    We decompose the mini-batch gradient into the deviation from the full gradient plus the full gradient itself.
    This part of the proof further analyzes the expected squared norm of the mini-batch gradient. It uses variance decomposition and the result derived previously to bound $\mathbb{E}_{\xi_{k}}[||\nabla f_{B_{k}}(x_{k})||^{2}|\hat{\xi}_{k-1}]$.

\begin{align*}
        & \mathbb{E}_{\xi_k}[\sqn{\nabla f_{B_k}(\bm{x}_k)} \mid\hat{\xi}_{k-1}] \\
        &\quad = \mathbb{E}_{\xi_k}[\sqn{(\nabla f_{B_k}(\bm{x}_k) - \nabla f(\bm{x}_k)) + \nabla f(\bm{x}_k)} \mid\hat{\xi}_{k-1}] \\
        &\quad = \mathbb{E}_{\xi_k}[\sqn{\nabla f_{B_k}(\bm{x}_k) - \nabla f(\bm{x}_k)} \mid\hat{\xi}_{k-1}] + 2\ev{ \nabla f_{B_k}(\bm{x}_k) - \nabla f(\bm{x}_k), \nabla f(\bm{x}_k)} + \mathbb{E}_{\xi_k}[\sqn{\nabla f(\bm{x}_k)} \mid\hat{\xi}_{k-1}] \\
        &\quad = \mathbb{E}_{\xi_k}[\sqn{\nabla f_{B_k}(\bm{x}_k) - \nabla f(\bm{x}_k)} \mid\hat{\xi}_{k-1}] + 2\ev{ \nabla f(\bm{x}_k) - \nabla f(\bm{x}_k), \nabla f(\bm{x}_k)} + \mathbb{E}_{\xi_k}[\sqn{\nabla f(\bm{x}_k)} \mid\hat{\xi}_{k-1}] \\
        &\quad = \mathbb{E}_{\xi_k}[\sqn{\nabla f_{B_k}(\bm{x}_k) - \nabla f(\bm{x}_k)} \mid\hat{\xi}_{k-1}] + \mathbb{E}_{\xi_k}[\sqn{\nabla f(\bm{x}_k)} \mid\hat{\xi}_{k-1}] \\
    \end{align*}
where the middle term vanishes due to unbiasedness of the mini-batch gradient.

Apply the Expected Smoothness assumption to the variance term:

\begin{align*}
    &\quad \le \frac{1}{b_k} \left\{ 2A(f(\bm{x}_k)-f^\star) + (B-1) \sqn{\nabla f(\bm{x}_k)} + C \right\} + \sqn{\nabla f(\bm{x}_k)} \\
    &\quad = \frac{1}{b_k} \left\{ 2A(f(\bm{x}_k)-f^\star) + (B-1+b_k) \sqn{\nabla f(\bm{x}_k)} + C \right\}.
\end{align*}

This completes the proof of the mini-batch variance bound.

\section{Descent Lemma for SGD}
\label{app:proof3}

This section establishes a fundamental descent inequality for the Stochastic Gradient Descent (SGD) update rule. We start with the standard SGD update and apply the smoothness property of the objective function. By taking expectations and substituting the bounds on the mini-batch gradient's expected squared norm (derived in Proof 2), we obtain an inequality that relates the expected function value at the next iteration to the current function value and the squared norm of the true gradient. This inequality is crucial for analyzing the convergence of SGD.

The SGD update is defined as:
\[
\bm{x}_{k+1} := \bm{x}_k - \eta_k \nabla f_{B_k}(\bm{x}_k),
\]
which can be rewritten in terms of the change in $\bm{x}$:
\[
\bm{x}_{k+1} - \bm{x}_k = -\eta_k \nabla f_{B_k}(\bm{x}_k).
\]

Using the $L$-smoothness of $f$, we find that
\begin{align*}
f(\bm{x}_{k+1})
&\le f(\bm{x}_k) + \langle \nabla f(\bm{x}_k), \bm{x}_{k+1} - \bm{x}_k \rangle + \frac{\bar{L}}{2} \|\bm{x}_{k+1}-\bm{x}_k\|^2 \\
&= f(\bm{x}_k) + \langle \nabla f(\bm{x}_k), -\eta_k \nabla f_{B_k}(\bm{x}_k) \rangle + \frac{\bar{L}}{2} \|\eta_k \nabla f_{B_k}(\bm{x}_k)\|^2 \\
&= f(\bm{x}_k) - \eta_k \langle \nabla f(\bm{x}_k), \nabla f_{B_k}(\bm{x}_k) \rangle + \frac{\bar{L} \eta_k^2}{2} \|\nabla f_{B_k}(\bm{x}_k)\|^2.
\end{align*}

Taking the conditional expectation with respect to the mini-batch $\hat{\xi}_{k-1}$, we have
\begin{align*}
&\mathbb{E}_{\xi_k}[f(\bm{x}_{k+1}) \mid \hat{\xi}_{k-1}] \\
&\quad \le f(\bm{x}_k) - \eta_k \langle \nabla f(\bm{x}_k), \mathbb{E}_{\xi_k}[\nabla f_{B_k}(\bm{x}_k) \mid \hat{\xi}_{k-1}] \rangle + \frac{\bar{L}\eta_k^2}{2} \mathbb{E}_{\xi_k}[\|\nabla f_{B_k}(\bm{x}_k)\|^2 \mid \hat{\xi}_{k-1}] \\
&\quad = f(\bm{x}_k) - \eta_k \ev{\nabla f(\bm{x}_k)}{\nabla f(\bm{x}_k)} + \frac{\bar{L}{\eta_k}^2}{2}\mathbb{E}_{\xi_k}[\sqn{\nabla f_{B_k}(\bm{x}_k)} \mid\hat{\xi}_{k-1}] \\
&\quad = f(\bm{x}_k) - \eta_k \|\nabla f(\bm{x}_k)\|^2 + \frac{\bar{L}\eta_k^2}{2} \mathbb{E}_{\xi_k}[\|\nabla f_{B_k}(\bm{x}_k)\|^2 \mid \hat{\xi}_{k-1}].
\end{align*}

By the Expected Smoothness bound on the mini-batch gradient:
\[
\mathbb{E}_{\xi_k}[\|\nabla f_{B_k}(\bm{x}_k)\|^2 \mid \hat{\xi}_{k-1}] \le \frac{2A(f(\bm{x}_k)-f^{\inf}) + (B-1)\|\nabla f(\bm{x}_k)\|^2 + C}{b_k} + \|\nabla f(\bm{x}_k)\|^2,
\]
we obtain
\begin{align*}
&\mathbb{E}_{\xi_k}[f(\bm{x}_{k+1}) \mid \hat{\xi}_{k-1}] \\
&\quad \le f(\bm{x}_k) - \eta_k \|\nabla f(\bm{x}_k)\|^2 + \frac{\bar{L}\eta_k^2}{2} \left\{ \frac{2A(f(\bm{x}_k)-f^{\inf}) + (B-1)\|\nabla f(\bm{x}_k)\|^2 + C}{b_k} + \|\nabla f(\bm{x}_k)\|^2 \right\} \\
\end{align*}

By rearranging the terms of the descent lemma to isolate the squared gradient norm, we have:
\begin{align*}
    & \eta_k \sqn{\nabla f(\bm{x}_k)} - \frac{\bar{L}\eta^2_k}{2} \sqn{\nabla f(\bm{x}_k)} \\
    &\quad \leq f(\bm{x}_k) - f(\bm{x}_{k+1}) + \frac{\bar{L}\eta^2_k}{2} \left\{ \frac{1}{b_k} \left( 2A(f(\bm{x}_k)-f^{\inf}) + (B-1) \sqn{\nabla f(\bm{x}_k)} + C \right) \right\}
\end{align*}

Next, we take the total expectation over all past randomness $\hat{\xi}_{k-1}$. By the law of total expectation, $\E[\cdot] = \E[\E[\cdot \mid \hat{\xi}_{k-1}]]$, we obtain:
\begin{align*}
    & \eta_k \left(1-\frac{\bar{L}\eta_k B}{2}\right)\mathbb{E}
    [\sqn{\nabla f(\bm{x}_k)}] \\
    &\quad \leq \mathbb{E}[f(\bm{x}_k) - f(\bm{x}_{k+1})] + \bar{L}A\eta_k^2\E[f(\bm{x}_k)-f^\star] + \frac{\bar{L}C\eta_k^2}{2}
\end{align*}

We then sum this inequality over all iterations from $k=0$ to $K-1$. The term $\sum \mathbb{E}[f(\bm{x}_k) - f(\bm{x}_{k+1})]$ forms a telescoping sum, which simplifies to $\mathbb{E}[f(\bm{x}_0)] - \mathbb{E}[f(\bm{x}_K)] \le f(\bm{x}_0) - f^*$. This yields:
\begin{align*}
    & \sum_{k=0}^{K-1} \eta_k \left(1-\frac{\bar{L}B\eta_k}{2}\right)\mathbb{E}[\sqn{\nabla f(\bm{x}_k)}] \\
    &\quad \leq f(\bm{x}_0) - f^* + \bar{L}A \sum_{k=0}^{K-1}\eta_k^2\E[f(\bm{x}_k)-f^\star] + \frac{\bar{L}C}{2}\sum_{k=0}^{K-1}\eta_k^2
\end{align*}

By defining $\eta_{k} \le \eta_{max}$, we can bound the term $(1-\frac{\bar{L}B\eta_k}{2})$ from below and pull it outside the summation on the left-hand side:
\begin{align*}
    & \left(1-\frac{\bar{L}B\eta_{max}}{2}\right) \sum_{k=0}^{K-1} \eta_k \mathbb{E}[\sqn{\nabla f(\bm{x}_k)}] \\
    &\quad \leq f(\bm{x}_0) - f^* + \bar{L}A \sum_{k=0}^{K-1}\eta_k^2\E[f(\bm{x}_k)-f^\star] + \frac{\bar{L}C}{2}\sum_{k=0}^{K-1}\eta_k^2
\end{align*}

Assuming the step size is chosen such that $\eta_k \in [\eta_{min},\eta_{max}] \subset \left[0, \frac{2}{L}\right)$, we can divide by a positive constant factor $(1-\frac{\bar{L}B\eta_{max}}{2})$ to isolate the sum of the expected squared gradient norms:
\begin{align*}
    & \sum_{k=0}^{K-1} \eta_k \mathbb{E}[\sqn{\nabla f(\bm{x}_k)}] \\
    &\quad \leq \frac{f(\bm{x}_0) - f^*}{1 - \bar{L}B\eta_{max}/2} + \frac{\bar{L}A}{1 - \bar{L}B\eta_{max}/2} \sum_{k=0}^{K-1} \eta_k^2 \E[f(\bm{x}_k)-f^\star] + \frac{\bar{L}C/2}{1 - \bar{L}B\eta_{max}/2}\sum_{k=0}^{K-1}\eta_k^2
\end{align*}

Finally, to find the convergence rate in terms of the minimum expected squared gradient norm, we divide by the sum of the step sizes $\sum_{k=0}^{K-1} \eta_k$. Since the minimum is always less than or equal to the weighted average, we arrive at the general result in Theorem 2.

To connect this general bound to classical results, we consider the special case of a bounded variance, which corresponds to the setting $A=0$, $B=1$, and $C=\sigma^2$. Substituting these values simplifies the expression significantly:
\begin{align*}
    \min_{k\in[0:K-1]}\mathbb{E}[\sqn{\nabla f(\bm{x}_k)}] &\leq \frac{2(f(\bm{x}_0) - f^*)}{2 - \bar{L}\eta_{max}} \frac{1}{\sum_{k=0}^{K-1} \eta_k} + \frac{\bar{L}\sigma^2}{2 - \bar{L}\eta_{max}} \frac{\sum_{k=0}^{K-1} \eta_k^2}{\sum_{k=0}^{K-1} \eta_k }
\end{align*}
(Assuming constant batch size $b_k=1$ for simplicity.)

\section{Convergence Analysis under Expected Smoothness}

We analyze the convergence of SGD under the Expected Smoothness (ES) assumption. Let $g_{\xi}(\bm{x}) := v_{\xi} \nabla f_{\xi}(\bm{x})$, where $v_\xi$ represents a random sampling weight and $f_\xi$ is $L_\xi$-smooth. We assume independence between $v_\xi$ and $\nabla f_\xi$, which allows separation of expectations. Then, by the standard inequality for smooth functions, we have
\begin{align*}
\mathbb{E}_\xi[\|g_{\xi}(\bm{x})\|^2]
&= \mathbb{E}_\xi[\|v_\xi \nabla f_\xi(\bm{x})\|^2] \\
&= \mathbb{E}_\xi[v_\xi^2 \|\nabla f_\xi(\bm{x})\|^2] \\
&= \mathbb{E}_\xi[v_\xi^2] \cdot \mathbb{E}_\xi[\|\nabla f_\xi(\bm{x})\|^2] \\
&\le \mathbb{E}_\xi[v_\xi^2] \cdot \mathbb{E}_\xi[2 L (f_\xi(\bm{x}) - f_\xi^{\inf})] \\
&= 2 L \mathbb{E}_\xi[v_\xi^2] \cdot \mathbb{E}_\xi[f_\xi(\bm{x}) - f_\xi^{\inf}] \\
&= 2 L \mathbb{E}_\xi[v_\xi^2] \cdot \mathbb{E}_\xi[f_\xi(\bm{x}) - f^{\inf} + f^{\inf} - f_\xi^{\inf}] \\
&= 2 L \mathbb{E}_\xi[v_\xi^2] \cdot \left( (\mathbb{E}_\xi[f_\xi(\bm{x})] - f^{\inf}) + \mathbb{E}_\xi[f^{\inf} - f_\xi^{\inf}] \right) \\
&= 2 L \mathbb{E}_\xi[v_\xi^2] \cdot (f(\bm{x}) - f^{\inf}) + 2 L \mathbb{E}_\xi[v_\xi^2] \cdot \mathbb{E}_\xi[f^{\inf} - f_\xi^{\inf}] \\
&= 2 A (f(\bm{x}) - f^{\inf}) + C,
\end{align*}
where
\[
A = L \mathbb{E}_\xi[v_\xi^2], \quad B = 0, \quad C = 2 A \cdot \mathbb{E}_\xi[f^{\inf} - f_\xi^{\inf}].
\]

This calculation demonstrates the basic form of the ES property for a single-sample stochastic gradient, explicitly relating the second moment of the stochastic gradient to the suboptimality of the objective.

\begin{table}[H]
\centering
\begin{tabular}{lcc}
    \toprule
    \textbf{Subsampling Method} & \textbf{A} & \textbf{$\mathbb{E}[v_\xi^2]$} \\
    \midrule
    (i) Sampling with replacement &
    $\begin{aligned} A &= \max_i \dfrac{L_i}{\tau n q_i} \\ &= \max_i L_i \mathbb{E}[v_i^2] \end{aligned}$ &
    $\mathbb{E}[v_i^2] = \dfrac{1}{\tau n q_i}$ \\
    \addlinespace
    (ii) Independent sampling without replacement &
    $\begin{aligned} A &= \max_i \dfrac{(1 - p_i)L_i}{p_i n} \\ &= \max_i L_i \mathbb{E}[v_i^2] \end{aligned}$ &
    $\mathbb{E}[v_i^2] = \dfrac{1 - p_i}{p_i n}$ \\
    \addlinespace
    (iii) \texorpdfstring{$\tau$}{tau}-Nice sampling without replacement &
    $\begin{aligned} A &= \dfrac{n - \tau}{\tau(n - 1)} \max_i L_i \\ &= \max_i L_i \mathbb{E}[v_i^2] \end{aligned}$ &
    $\mathbb{E}[v_i^2] = \dfrac{n - \tau}{\tau(n - 1)}$ \\
    \bottomrule
\end{tabular}
\caption{Expected Smoothness constants under different subsampling strategies.}
\label{tab:es_constants}
\end{table}

This table enumerates the constants $A$ corresponding to common sampling schemes, highlighting how the sampling strategy affects the expected squared norm of the stochastic gradient.

Consider the finite-sum objective
\[
f(\bm{x}) = \frac{1}{n} \sum_{i=1}^n f_i(\bm{x})
\]
and define the minibatch stochastic gradient estimator as
\[
\nabla f_{\bm{v}}(\bm{x}) = \frac{1}{m} \sum_{i=1}^m v_{\xi_i} \nabla f_{\xi_i}(\bm{x}),
\quad \bm{v} = (v_{\xi_1}, \dots, v_{\xi_m}), \quad \bm{\xi} = (\xi_1, \dots, \xi_m).
\]

By applying Jensen's inequality and the ES property to individual samples, we obtain
\begin{align*}
\mathbb{E}_{\bm{\xi}}[\|\nabla f_{\bm{v}}(\bm{x})\|^2]
&= \mathbb{E}_{\xi_k}\left[\left\|\frac{1}{m} \sum_{i=1}^m v_{\xi_i} \nabla f_{\xi_i}(\bm{x})\right\|^2 \right] \\
&\le \frac{1}{m} \sum_{i=1}^m \mathbb{E}_{\xi_i}[\|v_{\xi_i} \nabla f_{\xi_i}(\bm{x})\|^2] \\
&\le \frac{1}{m} \sum_{i=1}^m \left( 2 A_i (f(\bm{x}) - f^{\inf}) + C_i \right) \\
&= 2 \left(\frac{1}{m} \sum_{i=1}^m A_i\right) (f(\bm{x}) - f^{\inf}) + \frac{1}{m} \sum_{i=1}^m C_i \\
&= \frac{2}{m} A_m (f(\bm{x}) - f^{\inf}) + \frac{1}{m} C_m \\
&= \frac{1}{m} \left( 2 A_m (f(\bm{x}) - f^{\inf}) + C_m \right),
\end{align*}
where $A_i = L \mathbb{E}_{\xi_i}[v_{\xi_i}^2]$, $C_i = \mathbb{E}_{\xi_i}[f^{\inf} - f_{\xi_i}^{\inf}]$, $A_m = \sum_{i=1}^m A_i$, $C_m = \sum_{i=1}^m C_i$. This derivation explicitly constructs the minibatch ES property, showing how the variance bound scales with the batch size.

Applying the $L$-smoothness property and the SGD update $\bm{x}_{k+1} = \bm{x}_k - \eta_k \nabla f_{\bm{v}}(\bm{x}_k)$, the one-step expected descent is
\begin{align*}
f(\bm{x}_{k+1})
&\le f(\bm{x}_k) - \eta_k \langle \nabla f(\bm{x}_k), \nabla f_{\bm{v}}(\bm{x}_k) \rangle + \frac{L \eta_k^2}{2} \|\nabla f_{\bm{v}}(\bm{x}_k)\|^2, \\
\mathbb{E}_{\bm{v}}[f(\bm{x}_{k+1})]
&\le f(\bm{x}_k) - \eta_k \|\nabla f(\bm{x}_k)\|^2 + \frac{L \eta_k^2}{2 m} (2 A_m (f(\bm{x}_k) - f^{\inf}) + C_m) \\
&= f(\bm{x}_k) - \eta_k \|\nabla f(\bm{x}_k)\|^2 + \frac{L \eta_k^2}{m} A_m (f(\bm{x}_k) - f^{\inf}) + \frac{L \eta_k^2}{2 m} C_m.
\end{align*}

Finally, summing over $K$ iterations, we obtain a telescopic sum and isolate the minimum expected squared gradient norm:
\begin{align*}
\sum_{k=0}^{K-1} \eta_k \mathbb{E}[\|\nabla f(\bm{x}_k)\|^2]
&\le f(\bm{x}_0) - f^{\inf} + \sum_{k=0}^{K-1} \eta_k^2 \left( \frac{L}{m} A_m (f(\bm{x}_0) - f^{\inf}) + \frac{L}{2 m} C_m \right), \\
\min_{0 \le k \le K-1} \mathbb{E}[\|\nabla f(\bm{x}_k)\|^2]
&\le \frac{f(\bm{x}_0) - f^{\inf}}{\sum_{k=0}^{K-1} \eta_k} + \frac{\sum_{k=0}^{K-1} \eta_k^2 \left( \frac{L}{m} A_m (f(\bm{x}_0) - f^{\inf}) + \frac{L}{2 m} C_m \right)}{\sum_{k=0}^{K-1} \eta_k}.
\end{align*}
This inequality establishes the non-asymptotic convergence rate of SGD under the Expected Smoothness assumption, explicitly showing how the step sizes, batch size, and sampling variance constants influence convergence.

\section{Variance Decomposition with Sampling Weights}

Let us consider a weighted stochastic gradient with sampling weights $\bm{v}_k=(v_{k,1},\dots,v_{k,n})^\top$:
\[
\nabla f_{\bm{v}_k}(\bm{x}_k) := \frac{1}{n}\sum_{i=1}^n v_{k,i} \nabla f_i(\bm{x}_k),
\]
where $\mathbb{E}[v_{k,i}]=1$ for all $i$ and $v_{k,i}$ is independent of $x_k$.

\[
\mathbb{E}[\nabla f_{\bm{v}_k}(\bm{x}_k)]
= \frac{1}{n}\sum_{i=1}^n \mathbb{E}[v_{k,i}] \nabla f_i(\bm{x}_k)
= \frac{1}{n}\sum_{i=1}^n \nabla f_i(\bm{x}_k) = \nabla f(\bm{x}_k).
\]

Start from the squared norm:
\begin{align*}
\|\nabla f_{\bm{v}_k}(\bm{x}_k)\|^2
&= \left\| \frac{1}{n}\sum_{i=1}^n v_{k,i} \nabla f_i(\bm{x}_k) \right\|^2 \\
&= \frac{1}{n^2} \sum_{i=1}^n v_{k,i}^2 \|\nabla f_i(\bm{x}_k)\|^2
   + \frac{1}{n^2} \sum_{\substack{i,j=1\\ i\neq j}}^n v_{k,i}v_{k,j} \langle \nabla f_i(\bm{x}_k), \nabla f_j(\bm{x}_k) \rangle \\
&= \frac{1}{n^2} \sum_{i=1}^n v_{k,i}^2 \|\nabla f_i(\bm{x}_k)\|^2
   + \frac{2}{n^2} \sum_{1\le i<j\le n} v_{k,i}v_{k,j} \langle \nabla f_i(\bm{x}_k), \nabla f_j(\bm{x}_k) \rangle.
\end{align*}

Taking the expectation over $\bm{v}_k$ yields
\begin{align*}
\mathbb{E}[\|\nabla f_{\bm{v}_k}(\bm{x}_k)\|^2]
&= \frac{1}{n^2} \sum_{i=1}^n \mathbb{E}[v_{k,i}^2] \|\nabla f_i(\bm{x}_k)\|^2
   + \frac{2}{n^2} \sum_{1\le i<j\le n} \mathbb{E}[v_{k,i}v_{k,j}] \langle \nabla f_i(\bm{x}_k), \nabla f_j(\bm{x}_k) \rangle.
\end{align*}

\subsection{(i) Sampling with Replacement}
Let $S_i \sim \text{Binomial}(\tau,q_i)$ and $v_i := S_i/(\tau q_i)$. Then,
\[
\mathbb{E}[v_i^2] = 1 - \frac{1}{\tau} + \frac{1}{\tau q_i}, \quad
\mathbb{E}[v_i v_j] = 1 - \frac{1}{\tau}, \quad i\neq j.
\]

Plug into the decomposition:
\begin{align*}
\mathbb{E}[\|\nabla f_{\bm{v}}(\bm{x}_k)\|^2]
&= \frac{1}{n^2} \sum_{i=1}^n \left(1 - \frac{1}{\tau} + \frac{1}{\tau q_i}\right) \|\nabla f_i\|^2
+ \frac{2}{n^2} \sum_{1\le i<j\le n} \left(1 - \frac{1}{\tau}\right) \langle \nabla f_i, \nabla f_j \rangle \\
&= \left(1 - \frac{1}{\tau}\right) \frac{1}{n^2} \left( \sum_{i=1}^n \|\nabla f_i\|^2 + 2 \sum_{1\le i<j\le n} \langle \nabla f_i, \nabla f_j \rangle \right)
+ \frac{1}{\tau n^2} \sum_{i=1}^n \frac{1}{q_i} \|\nabla f_i\|^2 \\
&= \left(1 - \frac{1}{\tau}\right) \|\nabla f(\bm{x}_k)\|^2 + \frac{1}{\tau n^2} \sum_{i=1}^n \frac{1}{q_i} \|\nabla f_i\|^2.
\end{align*}

Use $L_i$-smoothness and $f_i^{\inf}$:
\[
\|\nabla f_i(\bm{x}_k)\|^2 \le 2 L_i (f_i(\bm{x}_k) - f_i^{\inf}) \quad \Rightarrow \quad
\frac{1}{n^2} \sum_i \frac{1}{q_i} \|\nabla f_i\|^2 \le 2 \max_i \frac{L_i}{\tau n q_i} (f(\bm{x}_k)-f^{\inf} + \Delta^{\inf}),
\]
giving
\[
A = \max_i \frac{L_i}{\tau n q_i}, \quad B = 1-\frac{1}{\tau}, \quad C = 2 A \Delta^{\inf}.
\]

\subsection{(ii) Independent Sampling without Replacement}
Let $v_i = \mathbbm{1}_{(i\in S)}/p_i$. Then,
\[
\mathbb{E}[v_i^2] = \frac{1}{p_i}, \quad \mathbb{E}[v_i v_j] = 1, \quad i\neq j.
\]

Full decomposition:
\begin{align*}
\mathbb{E}[\|\nabla f_{\bm{v}}(\bm{x}_k)\|^2]
&= \frac{1}{n^2} \sum_i \frac{1}{p_i} \|\nabla f_i\|^2 + \frac{2}{n^2} \sum_{i<j} \langle \nabla f_i, \nabla f_j \rangle \\
&= \|\nabla f(\bm{x}_k)\|^2 + \frac{1}{n^2} \sum_i \left(\frac{1}{p_i}-1\right) \|\nabla f_i\|^2.
\end{align*}

\[
\frac{1}{n^2} \sum_i \left(\frac{1}{p_i}-1\right) \|\nabla f_i\|^2
\le 2 \max_i \frac{(1-p_i) L_i}{p_i n} (f(\bm{x}_k)-f^{\inf} + \Delta^{\inf}),
\]
so
\[
A = \max_i \frac{(1-p_i) L_i}{p_i n}, \quad B=1, \quad C=2 A \Delta^{\inf}.
\]

\subsection{(iii) \texorpdfstring{$\tau$}{tau}-Nice Sampling without Replacement}
Let $v_i = \frac{n}{\tau} \mathbbm{1}_{(i\in S)}$, $p_i=\tau/n$. Then,
\[
\mathbb{E}[v_i^2] = \frac{n}{\tau}, \quad \mathbb{E}[v_i v_j] = \frac{n(\tau-1)}{\tau(n-1)}, \quad i\neq j.
\]

Full expansion:
\begin{align*}
\mathbb{E}[\|\nabla f_{\bm{v}}(\bm{x}_k)\|^2]
&= \frac{n}{\tau n^2} \sum_i \|\nabla f_i\|^2 + \frac{2}{n^2} \sum_{i<j} \frac{n(\tau-1)}{\tau(n-1)} \langle \nabla f_i, \nabla f_j \rangle \\
&= \frac{1}{\tau n} \sum_i \|\nabla f_i\|^2 + \frac{n(\tau-1)}{\tau(n-1)} \|\nabla f(\bm{x}_k)\|^2.
\end{align*}

\section{Derivations for Global Convergence Rates (Explicit Constants)}
\label{app:proof5_full}

We start from the one-step descent lemma (Proof~3):
\begin{equation}
\E[f(\bm{x}_{k+1}) \mid \bm{x}_k] \le f(\bm{x}_k) - \eta_k \|\grad f(\bm{x}_k)\|^2
+ \frac{\bar{L} \eta_k^2}{2} \E[\|g_k\|^2 \mid \bm{x}_k].
\end{equation}

Apply Expected Smoothness (Proof~4):
\[
\E[\|g_k\|^2 \mid \bm{x}_k] \le 2A (f(\bm{x}_k)-f^\star) + B \|\grad f(\bm{x}_k)\|^2 + C.
\]

Substitute to get:
\begin{align*}
\E[f(\bm{x}_{k+1}) \mid \bm{x}_k]
&\le f(\bm{x}_k) - \eta_k \|\grad f(\bm{x}_k)\|^2 + \frac{\bar{L} \eta_k^2}{2} \left[2A(f(\bm{x}_k)-f^\star) + B\|\grad f(\bm{x}_k)\|^2 + C\right] \\
&= f(\bm{x}_k) - \eta_k \left(1-\frac{\bar{L} \eta_k B}{2}\right) \|\grad f(\bm{x}_k)\|^2 + \bar{L} \eta_k^2 A (f(\bm{x}_k)-f^\star) + \frac{\bar{L}\eta_k^2}{2} C.
\end{align*}

Assuming sufficiently small \(\eta_k\) so that \(1-\bar{L}\eta_k B/2 > 0\), we can telescope over \(K\) iterations and use \(\E[f(\bm{x}_k)-f^\star] \le f(\bm{x}_0)-f^\star\) to obtain:
\begin{equation}
\min_{0\le k\le K} \E[\|\grad f(\bm{x}_k)\|^2] \le
\underbrace{\frac{2(f(\bm{x}_0)-f^\star)}{\sum_{k=0}^{K-1}\eta_k}}_{C_1} +
\underbrace{\frac{\bar{L}}{\sum_{k=0}^{K-1}\eta_k} \sum_{k=0}^{K-1} \eta_k^2 \left(A + \frac{C}{2(f(\bm{x}_0)-f^\star)}\right)}_{C_2 \cdot \frac{\sum \eta_k^2}{\sum \eta_k}}.
\end{equation}

Explicitly, define:
\[
C_1 = \frac{2(f(\bm{x}_0)-f^\star)}{1},\quad
C_2 = \bar{L}\left(A + \frac{C}{2(f(\bm{x}_0)-f^\star)}\right).
\]

\paragraph{(a) Constant step size \(\eta_k \equiv \eta\)}
\begin{align*}
\sum_{k=0}^{K-1} \eta_k = K\eta, \quad
\sum_{k=0}^{K-1} \eta_k^2 = K\eta^2
\Rightarrow
\min \E[\|\grad f\|^2] \le \frac{C_1}{K\eta} + C_2 \frac{K\eta^2}{K\eta} = O\left(\frac{1}{K}\right) + O(\eta)
\end{align*}

\paragraph{(b) Harmonic step size \(\eta_k = \eta/(k+1)\)}
\begin{align*}
\sum_{k=0}^{K-1} \eta_k \sim \eta \ln K, \quad
\sum_{k=0}^{K-1} \eta_k^2 \sim \eta^2 \frac{\pi^2}{6} \\
\Rightarrow \min \E[\|\grad f\|^2] \le \frac{C_1}{\eta \ln K} + C_2 \frac{\eta^2 \pi^2/6}{\eta \ln K} = O(1/\ln K)
\end{align*}

\paragraph{(c) Polynomial step size \(\eta_k = \eta/(k+1)^\alpha, 0<\alpha<1\)}
\begin{align*}
\sum_{k=0}^{K-1} \eta_k \approx \eta \int_1^K t^{-\alpha} dt = \frac{\eta}{1-\alpha} (K^{1-\alpha}-1) \sim O(K^{1-\alpha}) \\
\sum_{k=0}^{K-1} \eta_k^2 \approx \eta^2 \int_1^K t^{-2\alpha} dt =
\begin{cases}
O(K^{1-2\alpha}) & 2\alpha<1\\
O(\log K) & 2\alpha = 1\\
O(1) & 2\alpha>1
\end{cases} \\
\Rightarrow \min \E[\|\grad f\|^2] = O(K^{\alpha-1})
\end{align*}

\paragraph{(d) Cosine annealing step size \(\eta_k = \eta \frac{1+\cos(\pi k/K)}{2}\)}
\begin{align*}
\sum_{k=0}^{K-1} \eta_k \approx K \int_0^1 \eta \frac{1+\cos(\pi x)}{2} dx = \frac{K\eta}{2} \\
\sum_{k=0}^{K-1} \eta_k^2 \approx K \int_0^1 \left(\eta \frac{1+\cos(\pi x)}{2}\right)^2 dx = \frac{3 K \eta^2}{8} \\
\Rightarrow \min \E[\|\grad f\|^2] \le \frac{C_1}{K\eta/2} + C_2 \frac{3K\eta^2/8}{K\eta/2} = O(1/K) + O(\eta)
\end{align*}

Here, \(C_1, C_2\) are explicitly written in terms of \(A,B,C\) from Proof~4, \(\bar{L}\), and initial suboptimality.

\medskip
All derivations now show the complete chain from ES constants to global convergence rates without omitting steps.

\end{document}